\newcommand{\re}{\textbf{Re\,}}
\newcommand{\tfont}[1]{{\fontfamily{lmtt}\selectfont \textbf{#1}}}
\newtheorem{definition}{Definition}
\newtheorem{theorem}{Theorem}
\title{Duality-Induced Regularizer for Tensor Factorization Based Knowledge Graph Completion}
\author{%
  Zhanqiu Zhang \qquad
  Jianyu Cai \qquad
  Jie Wang \thanks{Corresponding author.} \vspace{1mm}\\
University of Science and Technology of China\\
\texttt{\{zzq96,jycai\}@mail.ustc.edu.cn,jiewangx@ustc.edu.cn}
}
\begin{document}

\maketitle

\begin{abstract}
Tensor factorization based models have shown great power in knowledge graph completion (KGC). However, their performance usually suffers from the overfitting problem seriously. This motivates various regularizers---such as the squared Frobenius norm and tensor nuclear norm regularizers---while the limited applicability significantly limits their practical usage.
To address this challenge, we propose a novel regularizer---namely, \textbf{DU}ality-induced \textbf{R}egul\textbf{A}rizer (DURA)---which is not only effective in improving the performance of existing models but widely applicable to various methods. The major novelty of DURA is based on the observation that, for an existing tensor factorization based KGC model (\textit{primal}), there is often another distance based KGC model (\textit{dual}) closely associated with it.
Experiments show that DURA yields consistent and significant improvements on benchmarks.
\end{abstract}

\section{Introduction}
Knowledge graphs contain quantities of factual triplets, which represent structured human knowledge. In the past few years, knowledge graphs have made great achievements in many areas, such as natural language processing \citep{ernie}, question answering \citep{KGQA}, recommendation systems \citep{KGRS}, and computer vision \cite{kg_cv}.
Although commonly used knowledge graphs usually contain billions of triplets, they still suffer from the incompleteness problem that a lot of factual triplets are missing. Due to the large scale of knowledge graphs, it is impractical to find all valid triplets manually. Therefore, knowledge graph completion (KGC)---which aims to predict missing links between entities based on known links automatically---has attracted much attention recently.

Distance based (DB) models and tensor factorization based (TFB) models are two important categories of KGC models. DB models use the Minkowski distance to measure the plausibility of a triplet. Although they can achieve state-of-the-art performance, many of them still have difficulty in modeling complex relation patterns, such as one-to-many and many-to-one relations \cite{transr,transh}. 
TFB models treat knowledge graphs as partially observed third-order binary tensors and formulate KGC as a tensor completion problem. Theoretically, these models are highly expressive and can well handle complex relations. However, their performance usually suffers from the overfitting problem seriously and consequently cannot achieve state-of-the-art. 

To tackle the overfitting problem of TFB models, researchers propose various regularizers. The squared Frobenius norm regularizer is a popular one that applies to various models \cite{rescal,distmult,complex}. However, experiments show that it may decrease performance for some models (e.g., RESCAL) \cite{old_dog}. More recently, motivated by the great success of the matrix trace norm in the matrix completion problem \cite{trace_norm,trace_norm2}, \citet{n3} propose a tensor nuclear $p$-norm regularizer. It gains significant improvements against the squared Frobenius norm regularizer. However, it is only suitable for canonical polyadic (CP) decomposition \cite{cp} based models, such as CP and ComplEx \cite{complex}, but not appropriate for a more general class of models, such as RESCAL \cite{rescal}. Therefore, it is still challenging to find a regularizer that is both widely applicable and effective.

In this paper, we propose a novel regularizer for tensor factorization based KGC models---namely, \textbf{DU}ality-induced \textbf{R}egul\textbf{A}rizer (DURA). The major novelty of DURA is based on the observation called \textit{duality}---for an existing tensor factorization based KGC model (\textit{primal}), there is often another distance based KGC model closely associated with it (\textit{dual}). The duality can be derived by expanding the squared score functions of the associated distance based models. Then, the cross-term in the expansion is exactly a tensor factorization based KGC model, and the squared terms in it give us a regularizer. Using DURA, we can preserve the expressiveness of tensor factorization based KGC models and prevent them from the overfitting problem. DURA is widely applicable to various tensor factorization based models, including CP, ComplEx, and RESCAL. 
Experiments show that, DURA yields consistent and significant improvements on datasets for the knowledge graph completion task. 
It is worth noting that, when incorporated with DURA, RESCAL \cite{rescal}---which is one of the first knowledge graph completion models---performs comparably to state-of-the-art methods and even beats them on several benchmarks.

\section{Preliminaries}
\vspace{-2mm}
In this section, we review the background of this paper in Section \ref{sec:bg} and introduce the notations used throughout this paper in Section \ref{sec:notation}.
\vspace{-2mm}
\subsection{Background}\label{sec:bg}
\noindent \textbf{Knowledge Graph}\hspace{1.5mm} Given a set $\mathcal{E}$ of entities and a set $\mathcal{R}$ of relations, a knowledge graph $\mathcal{K}=\{(e_i,r_j,e_k)\}\subset \mathcal{E}\times\mathcal{R}\times\mathcal{E}$ is a set of triplets, where $e_i$ and $r_j$ are the $i$-th entity and $j$-th relation, respectively. Usually, $e_i$ and $e_k$ are also called the head entity and the tail entity, respectively.

\noindent\textbf{Knowledge Graph Completion (KGC)} \hspace{1.5mm} The goal of KGC is to predict valid but unobserved triplets based on the known triplets in $\mathcal{K}$. KGC models contain two important categories: distance based models and tensor factorization based models, both of which are knowledge graph embedding (KGE) methods. KGE models associate each entity $e_i\in\mathcal{E}$ and relation $r_j\in\mathcal{R}$ with an embedding (may be real or complex vectors, matrices, and tensors) $\textbf{e}_i$ and $\textbf{r}_j$. Generally, they define a score function $s: \mathcal{E}\times\mathcal{R}\times\mathcal{E}\rightarrow\mathbb{R}$ to associate a score $s(e_i,r_j,e_k)$ with each potential triplet $(e_i,r_j,e_k)\in\mathcal{E}\times\mathcal{R}\times\mathcal{E}$. The scores measure the plausibility of triplets. For a query $(e_i,r_j,?)$, KGE models first fill the blank with each entity in the knowledge graphs and then score the resulted triplets. Valid triplets are expected to have higher scores than invalid triplets. 

\noindent\textbf{Distance Based (DB) KGC Models}\hspace{1.5mm} DB models define the score function $s$ with the Minkowski distance. That is, the score functions have the formulation of $s(e_i,r_j,e_k)=-\|\Gamma(e_i,r_j,e_k)\|
._p$, where $\Gamma$ is a model-specific function. Equivalently, we can also use a squared score function $s(e_i,r_j,e_k)=-\|\Gamma(e_i,r_j,e_k)\|_p^2$.

\noindent\textbf{Tensor Factorization Based (TFB) KGC Models}\hspace{1.5mm} TFB models regard a knowledge graph as a third-order binary tensor $\mathcal{X}\in\{0,1\}^{|\mathcal{E}|\times|\mathcal{R}|\times|\mathcal{E}|}$. The $(i,j,k)$ entry $\mathcal{X}_{ijk}=1$ if $(e_i, r_j, e_k)$ is valid otherwise $\mathcal{X}_{ijk}=0$. Suppose that $\mathcal{X}_j$ denotes the $j$-th frontal slice of $\mathcal{X}$, i.e., the adjacency matrix of the $j$-th relation. Usually, a TFB KGC model factorizes $\mathcal{X}_j$ as $\mathcal{X}_j\approx\re(\overline{\textbf{H}} \textbf{R}_j\textbf{T}^\top)$, where the $i$-th ($k$-th) row of $\textbf{H}$ ($\textbf{T}$) is $\textbf{e}_i$ ($\textbf{e}_k$), $\textbf{R}_j$ is a matrix representing relation $r_j$, $\re(\cdot)$ and $\overline{\cdot}$ are the real part and the conjugate of a complex matrix, respectively. That is, the score functions are defined as $s(e_i,r_j,e_k)=\re(\bar{\textbf{e}}_i\textbf{R}_j\textbf{e}_k^\top)$. Note that the real part and the conjugate of a real matrix are itself.
Then, the aim of TFB models is to seek matrices $\textbf{H}, \textbf{R}_1,\dots,\textbf{R}_{|\mathcal{R}|},\textbf{T}$, such that $\re(\overline{\textbf{H}} \textbf{R}_j\textbf{T}^\top)$ can approximate $\mathcal{X}_j$.
Let $\hat{\mathcal{X}}_j=\re(\overline{\textbf{H}} \textbf{R}_j\textbf{T}^\top)$ and $\hat{\mathcal{X}}$ be a tensor of which the $j$-th frontal slice is $\hat{\mathcal{X}}_j$. The regularized formulation of a tensor factorization based model can be written as 
\begin{align}\label{prob:optim}
    \min_{\hat{\mathcal{X}}_1,\dots,\hat{\mathcal{X}}_{|\mathcal{R}|}}\sum_{j=1}^{|\mathcal{R}|} L(\mathcal{X}_j,\hat{\mathcal{X}}_j)+\lambda g(\hat{\mathcal{X}}),
\end{align}
where $\lambda>0$ is a fixed parameter, $L(\mathcal{X}_j,\hat{\mathcal{X}}_j)$ measures the discrepancy between $\mathcal{X}_j$ and $\hat{\mathcal{X}}_j$, and $g$ is the regularization function.

\subsection{Other Notations}\label{sec:notation}
We use $h_i\in\mathcal{E}$ and $t_k\in\mathcal{E}$ to distinguish head and tail entities. Let $\|\cdot\|_1$, $\|\cdot\|_2$, and $\|\cdot\|_F$ denote the $L_1$ norm, the $L_2$ norm, and the Frobenius norm of matrices or vectors. We use $\langle \cdot, \cdot\rangle$ to represent the inner products of two real or complex vectors. Specifically, if $\textbf{u},\textbf{v}\in\mathbb{C}^{1\times n}$ are two row vectors in the complex space, then the inner product is defined as
$\langle \textbf{u},\textbf{v}\rangle=\bar{\textbf{u}}\textbf{v}^\top$.

\section{Related Work}\label{sec:related_work}
Knowledge graph completion (KGC) models include rule-based methods \cite{amie,neurallp}, KGE methods, and hybrid methods \cite{ruge}. This work is related to KGE methods \cite{transe,complex,kbat,quate}. More specifically, it is related to distance based KGE models and tensor factorization based KGE models.

Distance based models describe relations as relational maps between head and tail entities. Then, they use the Minkowski distance to measure the plausibility of a given triplet. 
For example, TransE \cite{transe} and its variants \cite{transh,transr} represent relations as translations in vector spaces. They assume that a valid triplet $(h_i, r_j, t_k)$ satisfies $\textbf{h}_{i,r_j}+\textbf{r}_j\approx \textbf{t}_{k,r_j}$, where $\textbf{h}_{i,r_j}$ and $\textbf{t}_{k,r_j}$ mean that entity embeddings may be relation-specific.
Structured embedding (SE) \cite{se} uses linear maps to represent relations. Its score function is defined as $s(h_i,r_j,t_k)=-\|\textbf{R}_j^1\textbf{h}_i-\textbf{R}_j^2\textbf{t}_k\|_1$. RotatE \cite{rotate} defines each relation as a rotation in a complex vector space and the score function is defined as $s(h_i,r_j,t_k)=-\|\textbf{h}_i\circ \textbf{r}_j-\textbf{t}_k\|_1$, where $\textbf{h}_i, \textbf{r}_j, \textbf{t}_k\in\mathbb{C}^k$ and $|[\textbf{r}]_i|=1$. ModE \cite{hake}
assumes that $\textbf{R}_j^1$ is diagonal and $\textbf{R}_j^2$ is an identity matrix. It shares a similar score function $s(h_i,r_j,t_k)=-\|\textbf{h}_i\circ \textbf{r}_j-\textbf{t}_k\|_1$ with RotatE but $\textbf{h}_i, \textbf{r}_j, \textbf{t}_k\in\mathbb{R}^k$. 

Tensor factorization based models formulate the KGC task as a third-order binary tensor completion problem. 
RESCAL \cite{rescal} factorizes the $j$-th frontal slice of $\mathcal{X}$ as $\mathcal{X}_j\approx \textbf{A}\textbf{R}_j\textbf{A}^\top$, in which embeddings of head and tail entities are from the same space. As the relation specific matrices contain lots of parameters, RESCAL is prone to be overfitting. DistMult \cite{distmult} simplifies the matrix $\textbf{R}_j$ in RESCAL to be diagonal, while it sacrifices the expressiveness of models and can only handle symmetric relations. In order to model asymmetric relations, ComplEx \cite{complex} extends DistMult to complex embeddings. Both DistMult and ComplEx can be regarded as variants of CP decomposition \cite{cp}, which are in real and complex vector spaces, respectively.

Tensor factorization based (TFB) KGC models usually suffer from overfitting problem seriously, which motivates various regularizers. In the original papers of TFB models, the authors usually use the squared Frobenius norm ($L_2$ norm) regularizer \cite{rescal,distmult,complex}. This regularizer cannot bring satisfying improvements. Consequently, TFB models do not gain comparable performance to distance based models \cite{rotate,hake}. More recently, \citet{n3} propose to use the tensor nuclear 3-norm \cite{nunorm} (N3) as a regularizer, which brings more significant improvements than the squared Frobenius norm regularizer. However, it is designed for the CP-like models, such as CP and ComplEx, and not suitable for more general models such as RESCAL. Moreover, some regularization methods aim to leverage external background knowledge \cite{bg_reg,simp_cons,reg_r3}. For example, to model equivalence and inversion axioms, \citet{bg_reg} impose a set of model-dependent soft constraints on the predicate embeddings. \citet{simp_cons} use non-negativity constraints on entity embeddings and approximate entailment constraints on relation embeddings to impose prior beliefs upon the structure of the embeddings space.

\section{Methods}
In this section, we introduce a novel regularizer---\textbf{DU}ality-induced \textbf{R}egul\textbf{A}rizer (\textbf{DURA})---for tensor factorization based knowledge graph completion. We first introduce basic DURA in Section \ref{sec:dura} and explain why it is effective in Section \ref{sec:why}. Then, we introduce DURA in Section \ref{sec:adapt}. Finally, we give a theoretical analysis for DURA under some special cases in Section \ref{sec:theo}.

\begin{figure}[t]
    \centering 
\begin{subfigure}{0.4\textwidth}
  \includegraphics[width=150pt]{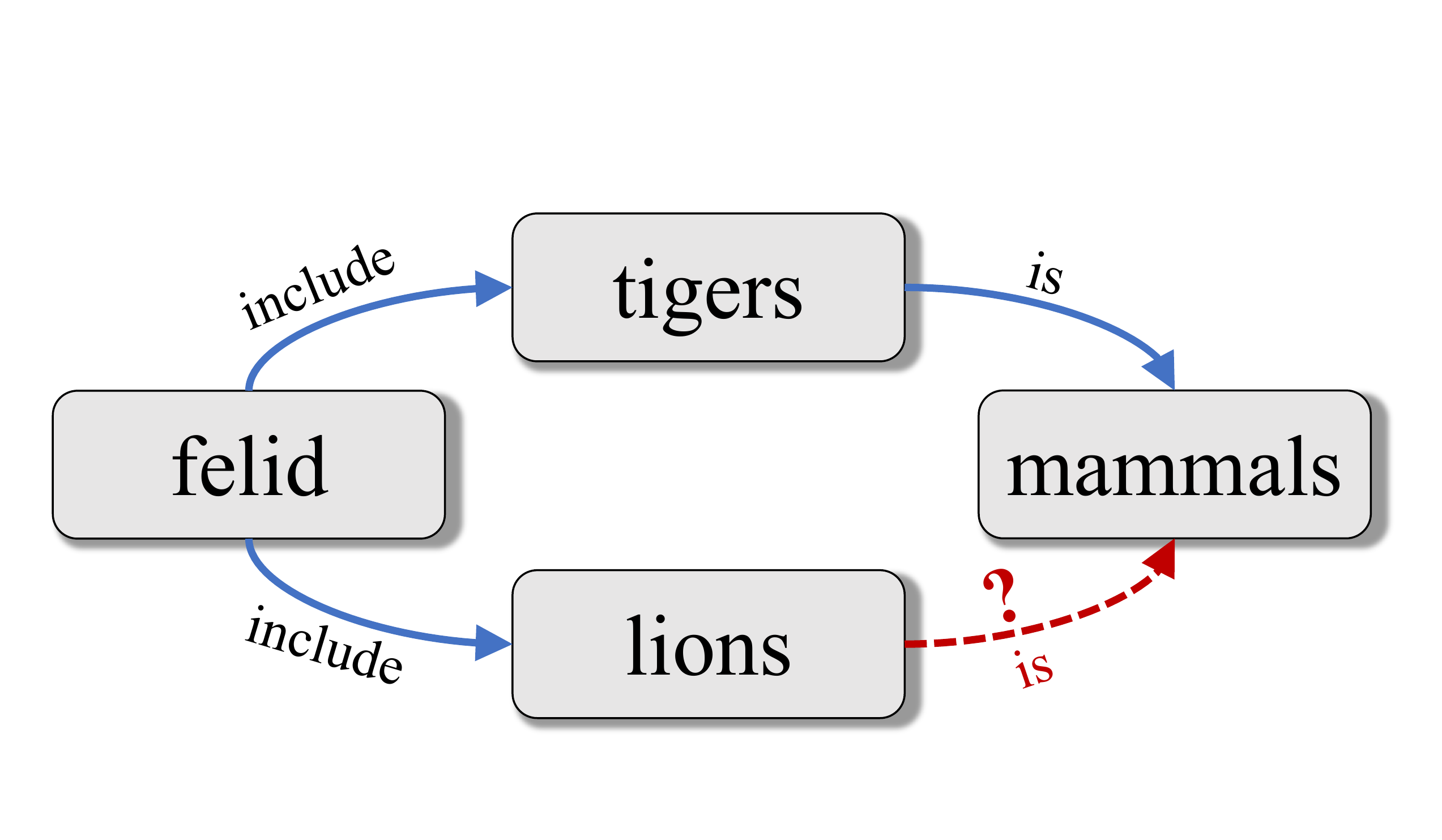}
  \vskip -0.02in
  \caption{A KGC problem.\label{fig:illu}}
\end{subfigure}
\hspace{5mm}
\medskip
\begin{subfigure}{0.25\textwidth}
  \includegraphics[width=95pt]{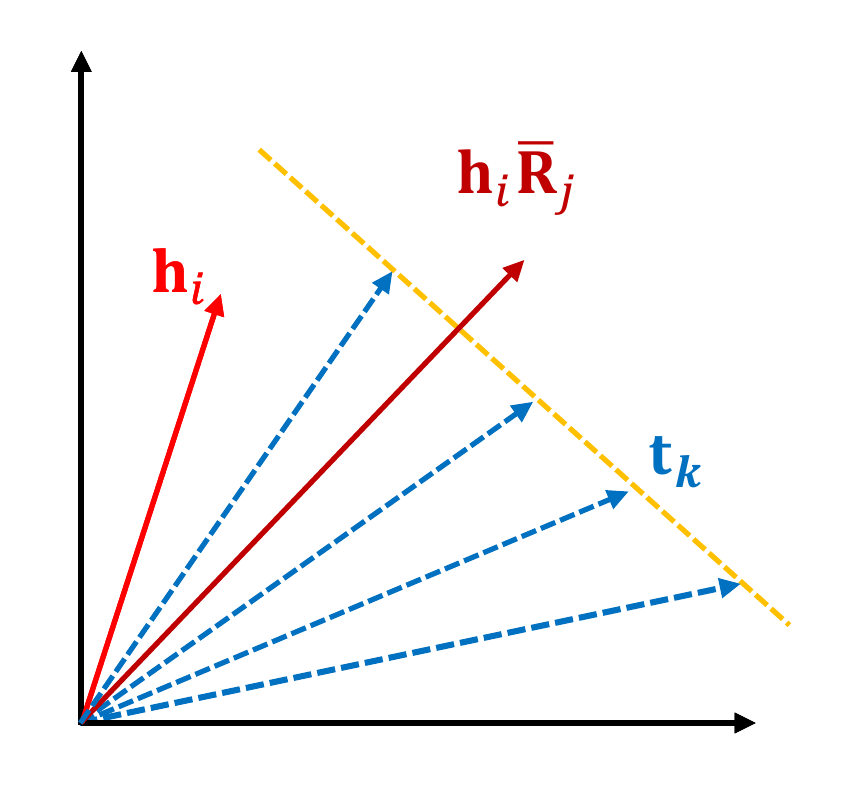}
  \vskip -0mm
  \caption{Without regularization.\label{fig:reg_na}}
\end{subfigure}\hfil 
\hspace{5mm}
\medskip
\begin{subfigure}{0.25\textwidth}
  \includegraphics[width=95pt]{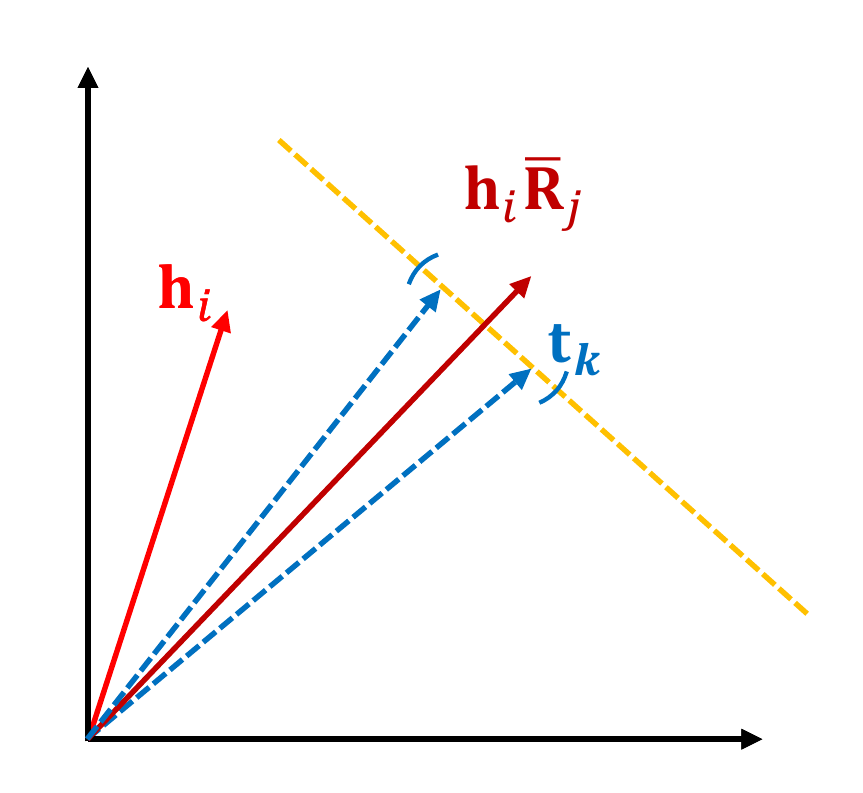}
  \vskip -0mm
  \caption{With DURA.}\label{fig:reg_dura}
\end{subfigure}
\vskip -2.5mm
\caption{An illustration of why basic DURA can improve the performance of TFB models when the embedding dimensions are $2$. Suppose that triplets $(h_i,r_j,t_k)$ ($k=1,2,\dots,n$) are valid. (a) Figure \ref{fig:illu} demonstrates that tail entities connected to a head entity through the same relation should have similar embeddings. (b) Figure \ref{fig:reg_na} shows that TFB models without regularization can get the same score even though the embeddings of $t_k$ are dissimilar. (c) Figure \ref{fig:reg_dura} shows that with DURA, embeddings of $t_k$ are encouraged to locate in a small region.}
\label{fig:motivation}
\vskip -0.1in
\end{figure}
\subsection{Basic DURA}\label{sec:dura}
Consider the knowledge graph completion problem $(h_i, r_j,?)$. That is, we are given the head entity and the relation, aiming to predict the tail entity. Suppose that $f_j(i,k)$ measures the plausibility of a given triplet $(h_i,r_j,t_k)$, i.e., $f_j(i,k)=s(h_i,r_j,t_k)$. Then the score function of a TFB model is
\begin{align}\label{eqn:score}
    f_j(i,k)=\re(\overline{\textbf{h}}_i \textbf{R}_j\textbf{t}_k^\top)=\re(\langle \textbf{h}_i \overline{\textbf{R}}_j, \textbf{t}_k\rangle).
\end{align}
It first maps the entity embeddings $\textbf{h}_i$ by a linear transformation $\overline{\textbf{R}}_j$, and then uses the real part of an inner product to measure the similarity between $\textbf{h}_i \overline{\textbf{R}}_j$ and $\textbf{t}_k$. 
Notice that another commonly used similarity measure---the squared Euclidean distance---can replace the inner product similarity in Equation \eqref{eqn:score}. We can obtain an associated distance based model formulated as
\begin{align}\label{eqn:squared_ds}
    f_j^E(i,k)=-\|\textbf{h}_i\overline{\textbf{R}}_j-\textbf{t}_k\|_2^2.
\end{align}
Therefore, there exists a \textit{\textbf{duality}}: for an existing tensor factorization based KGC model (\textit{\textbf{primal}}), there is often another distance based KGC model (\textit{\textbf{dual}}) closely associated with it.

Specifically, the relationship between the primal and the dual can be formulated as
\begin{equation}
\begin{split}
    f_j^E(i,k)&=-\|\textbf{h}_i\overline{\textbf{R}}_j-\textbf{t}_k\|_2^2\\
    &=-\|\textbf{h}_i\overline{\textbf{R}}_j\|_2^2-\|\textbf{t}_k\|_2^2+2\re(\langle \textbf{h}_i\overline{\textbf{R}}_j, \textbf{t}_k\rangle)\\
    &=2f_j(i,k)-\|\textbf{h}_i\overline{\textbf{R}}_j\|_2^2-\|\textbf{t}_k\|_2^2.
\end{split}
\end{equation}
Usually, we expect $f_j^E(i,k)$ and $f_j(i,k)$ to be higher for all valid triplets $(h_i,r_j,t_k)$ than those for invalid triplets. Suppose that $\mathcal{S}$ is the set that contains all valid triplets. Then, for triplets in $\mathcal{S}$, we have that 
\begin{align}\label{eqn:dual_rel}
    &\max f_j^E(i,k)
    = \min -f_j^E(i,k)\nonumber\\
    =&\min -2f_j(i,k)+\|\textbf{h}_i\overline{\textbf{R}}_j\|_2^2+\|\textbf{t}_k\|_2^2.
\end{align}
By noticing that $\min -2f_j(i,k)=\max 2f_j(i,k)$ is exactly the aim of a TFB model, the duality induces a regularizer for tensor factorization based KGC models, i.e.,
\begin{align}\label{reg:f}
   \sum_{(h_i,r_j,t_k)\in\mathcal{S}}\|\textbf{h}_i\overline{\textbf{R}}_j\|_2^2+\|\textbf{t}_k\|_2^2,
\end{align}
which is called basic DURA.

\subsection{Why Basic DURA Helps}\label{sec:why}

In this section, we demonstrate that basic DURA encourages tail entities connected to a head entity through the same relation to have similar embeddings, which accounts for its effectiveness in improving performance of TFB models.

First, we claim that tail entities connected to a head entity through the same relation should have similar embeddings. Suppose that we know a head entity $h_i$ and a relation $r_j$, and our aim is to predict the tail entity.  If $r_j$ is a one-to-many relation, i.e., there exist two entities $t_1$ and $t_2$ such that both $(h_i,r_j,t_1)$ and $(h_i,r_j,t_2)$ are valid, then we expect that $t_1$ and $t_2$ have similar semantics. For example, if two triplets (\tfont{felid}, \tfont{include}, \tfont{tigers}) and (\tfont{felid}, \tfont{include}, \tfont{lions}) are valid, then \tfont{tigers} and \tfont{lions} should have similar semantics. Further, we expect that entities with similar semantics have similar embeddings. In this way, if we have known that (\tfont{tigers}, \tfont{is}, \tfont{mammals}) is valid, then we can predict that (\tfont{lions}, \tfont{is}, \tfont{mammals}) is also valid. See Figure \ref{fig:illu} for an illustration of the prediction process. 

However, TFB models fail to achieve the above goal. As shown in Figure \ref{fig:reg_na}, suppose that we have known $\textbf{h}_i\bar{\textbf{R}}_j$ when the embedding dimension is $2$. Then, we can get the same score $s(h_i,r_j,t_k)$ for $k=1,2,\dots,n$ so long as $\textbf{t}_k$ lies on the same line perpendicular to $\textbf{h}_i\bar{\textbf{R}}_j$. Generally, the entities $t_1$ and $t_2$ have similar semantics. However, their embeddings $\textbf{t}_1$ and $\textbf{t}_2$ can even be orthogonal, which means that the two embeddings are dissimilar. Therefore, the performance of TFB models for knowledge graph completion is usually unsatisfying.

By Equation \eqref{eqn:dual_rel}, we know that basic DURA constraints the distance between $\textbf{h}_i\bar{\textbf{R}}_j$ and $\textbf{t}_k$. When $\textbf{h}_i$ and $\bar{\textbf{R}}_j$ are known, $\textbf{t}_k$ lies in a small region (see Figure \ref{fig:reg_dura} and we verify this claim in Section \ref{sec:vis}). Therefore, tail entities  connected to a head entity through the same relation will have similar embeddings, which is beneficial to the prediction of unknown triplets.

\subsection{DURA}\label{sec:adapt}
Basic DURA encourages tail entities with similar semantics to have similar embeddings. However, it cannot handle the case that head entities have similar semantics.

Suppose that two triplets (\tfont{tigers}, \tfont{is}, \tfont{mammals}) and (\tfont{lions}, \tfont{is}, \tfont{mammals}) are valid. Similar to the discussion in Section \ref{sec:why}, we expect that \tfont{tigers} and \tfont{lions} have similar semantics and thus have similar embeddings. If we further know that (\tfont{felid}, \tfont{include}, \tfont{tigers}) is valid, we can predict that (\tfont{felid}, \tfont{include}, \tfont{lions}) is valid. However, basic DURA cannot handle the case.
Let $\textbf{h}_1$, $\textbf{h}_2$, $\textbf{t}_1$, and $\textbf{R}_1$ be the embeddings of \tfont{tigers}, \tfont{lions}, \tfont{mammals}, and \tfont{is}, respectively. Then, $\re(\overline{\textbf{h}}_1 \textbf{R}_1\textbf{t}_1^\top)$ and $\re(\overline{\textbf{h}}_2 \textbf{R}_1\textbf{t}_1^\top)$ can be equal even if $\textbf{h}_1$ and $\textbf{h}_2$ are orthogonal, as long as $\textbf{h}_1\overline{\textbf{R}}_1=\textbf{h}_2\overline{\textbf{R}}_1$.

To tackle the above issue, noticing that $\re(\overline{\textbf{h}}_i\textbf{R}_j\textbf{t}_k^\top)=\re(\overline{\textbf{t}}_k \overline{\textbf{R}}_j^\top\textbf{h}_i^\top)$, we define another dual distance based  KGC model
\begin{align*}
    \tilde{f}_{j}^E(i,k)=-\|\textbf{t}_k \textbf{R}_j^\top-\textbf{h}_i\|_2^2,
\end{align*}
Then, similar to the derivation in Equation \eqref{eqn:dual_rel}, the duality induces a regularizer given by 
\begin{align}\label{reg:b}
   \sum_{(h_i,r_j,t_k)\in\mathcal{S}}\|\textbf{t}_k\textbf{R}_j^\top\|^2+\|\textbf{h}_i\|^2.
\end{align}
When a TFB model are incorporated with regularizer \eqref{reg:b}, head entities with similar semantics will have similar embeddings.

Finally, combining the regularizer \eqref{reg:f} and \eqref{reg:b}, DURA has the form of 
\begin{align}\label{reg:dura}
    \sum_{(h_i,r_j,t_k)\in\mathcal{S}}\left[\|\textbf{h}_i\overline{\textbf{R}}_j\|_2^2+\|\textbf{t}_k\|_2^2+\|\textbf{t}_k\textbf{R}_j^\top\|_2^2+\|\textbf{h}_i\|_2^2\right].
\end{align}

\subsection{Theoretic Analysis for Diagonal Relation Matrices}\label{sec:theo}
If we further relax the summation condition in the regularizer \eqref{reg:dura} to all possible entities and relations, we can write DURA as:
\begin{align}\label{reg:rewrite}
    |\mathcal{E}|\sum_{j=1}^{|\mathcal{R}|}(\|\textbf{H}\overline{\textbf{R}}_j\|_F^2+\|\textbf{T}\|_F^2+\|\textbf{T}\textbf{R}_j^\top\|_F^2+\|\textbf{H}\|_F^2),
\end{align}
where $|\mathcal{E}|$ and $|\mathcal{R}|$ are the number of entities and relations, respectively. 

In the rest of this section, we use the same definitions of $\hat{\mathcal{X}}_j$ and $\hat{\mathcal{X}}$ as in the problem \eqref{prob:optim}.  When the relation embedding matrices $\textbf{R}_j$ are diagonal in $\mathbb{R}$ or $\mathbb{C}$ as in CP or ComplEx, the formulation \eqref{reg:rewrite} gives an upper bound to the tensor nuclear 2-norm of $\hat{\mathcal{X}}$, which is an extension of trace norm regularizers in matrix completion. To simplify the notations, we take CP as an example, in which all involved embeddings are real. The conclusion in complex space can be analogized accordingly.

\begin{definition}[\citet{nunorm}]
The nuclear 2-norm of a 3D tensor $\mathcal{A}\in\mathbb{R}^{n_1}\otimes\mathbb{R}^{n_2}\otimes\mathbb{R}^{n_3}$ is 
\begin{align*}
    \|\mathcal{A}\|_{*}=\min&\left\{\sum_{i=1}^r\|\textbf{u}_{1,i}\|_2\|\textbf{u}_{2,i}\|_2\|\textbf{u}_{3,i}\|_2:\right.\left.\mathcal{A}=\sum_{i=1}^r\textbf{u}_{1,i}\otimes \textbf{u}_{2,i}\otimes \textbf{u}_{3,i},r\in\mathbb{N}\right\},
\end{align*}
where $\textbf{u}_{k,i} \in \mathbb{R}^{n_k}$ for $k=1, ..., 3$, $i=1,...,r$, and $\otimes$ denotes the outer product.
\end{definition}
For notation convenience, we define a relation matrix $\widetilde{\mathbf{R}} \in \mathbb{R}^{|\mathcal{R}| \times D}$, of which the $j$-th row
consists of the diagonal entries of $\textbf{R}_j$. That is,
 $\widetilde{\textbf{R}}(j, d) =  \textbf{R}_j (d, d),$
where $\textbf{R}(i, j)$ represents the  entry in the $i$-th row and $j$-th column of the matrix $\textbf{R}$.

In the knowledge graph completion problem, the tensor nuclear 2-norm of $\hat{\mathcal{X}}$ is
\begin{align*}
    \|\hat{\mathcal{X}}\|_{*}=\min&\left\{\sum_{d=1}^D\|\textbf{h}_{:d}\|_2\|\textbf{r}_{:d}\|_2\|\textbf{t}_{:d}\|_2:\right.\left.\hat{\mathcal{X}}=\sum_{d=1}^D\textbf{h}_{:d}\otimes \textbf{r}_{:d}\otimes \textbf{t}_{:d}  \right\},\nonumber
\end{align*}
where $D$ is the embedding dimension, $\textbf{h}_{:d}$, $\textbf{r}_{:d}$, and $\textbf{t}_{:d}$ are the $d$-th columns of $\textbf{H}$, $\widetilde{\textbf{R}}$, and $\textbf{T}$.

For DURA in \eqref{reg:rewrite}, we have the following theorem.
\begin{theorem}\label{thm:main}
Suppose that $\hat{\mathcal{X}}_j=\textbf{H}\textbf{R}_j\textbf{T}^\top$ for $j=1,2,\dots,|\mathcal{R}|$, where $\textbf{H},\textbf{T},\textbf{R}_j$ are real matrices and $\textbf{R}_j$ is diagonal. Then, the following equation holds
\begin{align*}
    \min_{\substack{\hat{\mathcal{X}}_j=\textbf{H} \textbf{R}_j\textbf{T}^\top\\  }}\frac{1}{\sqrt{|\mathcal{R}|}}\sum_{j=1}^{|\mathcal{R}|}(\|\textbf{H}\textbf{R}_j\|_F^2+\|\textbf{T}\|_F^2+\|\textbf{T}\textbf{R}_j^\top\|_F^2+\|\textbf{H}\|_F^2)
=\|\hat{\mathcal{X}}\|_*.
\end{align*}
The minimization attains if and only if
$
  \|\textbf{h}_{:d}\|_2\|\textbf{r}_{:d}\|_2=\sqrt{|\mathcal{R}|}\|\textbf{t}_{:d}\|_2
$
and
$
  \|\textbf{t}_{:d}\|_2\|\textbf{r}_{:d}\|_2=\sqrt{|\mathcal{R}|}\|\textbf{h}_{:d}\|_2,
$
$\,\forall\,d\in\{1,2,\ldots, D\}$,
where $\textbf{h}_{:d}$, $\textbf{r}_{:d}$, and $\textbf{t}_{:d}$ are the $d$-th columns of $\textbf{H}$, $\widetilde{\textbf{R}}$, and $\textbf{T}$, respectively.
\end{theorem}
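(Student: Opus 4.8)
The plan is to use the diagonal structure of the $\textbf{R}_j$ to make the regularizer separate over the $D$ rank-one components of $\hat{\mathcal{X}}$, bound each component from below by an AM--GM inequality whose equality case is exactly one of the two conditions in the statement, and then establish tightness by exploiting the scaling freedom of rank-one tensors. The first move is to rewrite the four summands in column form. Since $\textbf{R}_j$ is diagonal with $\textbf{R}_j(d,d)=\widetilde{\textbf{R}}(j,d)$, right-multiplying $\textbf{H}$ or $\textbf{T}$ by $\textbf{R}_j$ (note $\textbf{R}_j^\top=\textbf{R}_j$) merely scales their $d$-th columns by $\widetilde{\textbf{R}}(j,d)$; hence $\sum_j\|\textbf{H}\textbf{R}_j\|_F^2=\sum_d\|\textbf{h}_{:d}\|_2^2\|\textbf{r}_{:d}\|_2^2$, $\sum_j\|\textbf{T}\textbf{R}_j^\top\|_F^2=\sum_d\|\textbf{t}_{:d}\|_2^2\|\textbf{r}_{:d}\|_2^2$, $\sum_j\|\textbf{T}\|_F^2=|\mathcal{R}|\sum_d\|\textbf{t}_{:d}\|_2^2$, and $\sum_j\|\textbf{H}\|_F^2=|\mathcal{R}|\sum_d\|\textbf{h}_{:d}\|_2^2$, so the objective becomes $\tfrac{1}{\sqrt{|\mathcal{R}|}}\sum_{d=1}^D\big[(\|\textbf{h}_{:d}\|_2^2\|\textbf{r}_{:d}\|_2^2+|\mathcal{R}|\|\textbf{t}_{:d}\|_2^2)+(\|\textbf{t}_{:d}\|_2^2\|\textbf{r}_{:d}\|_2^2+|\mathcal{R}|\|\textbf{h}_{:d}\|_2^2)\big]$, which splits into independent per-$d$ pieces.

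Next I would apply $x+y\ge 2\sqrt{xy}$ to each of the two bracketed groups: the first is $\ge 2\sqrt{|\mathcal{R}|}\,\|\textbf{h}_{:d}\|_2\|\textbf{r}_{:d}\|_2\|\textbf{t}_{:d}\|_2$ with equality iff $\|\textbf{h}_{:d}\|_2\|\textbf{r}_{:d}\|_2=\sqrt{|\mathcal{R}|}\|\textbf{t}_{:d}\|_2$, and the second is $\ge 2\sqrt{|\mathcal{R}|}\,\|\textbf{h}_{:d}\|_2\|\textbf{r}_{:d}\|_2\|\textbf{t}_{:d}\|_2$ with equality iff $\|\textbf{t}_{:d}\|_2\|\textbf{r}_{:d}\|_2=\sqrt{|\mathcal{R}|}\|\textbf{h}_{:d}\|_2$ --- precisely the two conditions in the theorem. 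Summing over $d$ and dividing by $\sqrt{|\mathcal{R}|}$ bounds the objective below, up to the normalizing constant, by $\sum_d\|\textbf{h}_{:d}\|_2\|\textbf{r}_{:d}\|_2\|\textbf{t}_{:d}\|_2$; and since $(\textbf{H},\widetilde{\textbf{R}},\textbf{T})$ realizes the feasible rank-$D$ decomposition $\hat{\mathcal{X}}=\sum_d\textbf{h}_{:d}\otimes\textbf{r}_{:d}\otimes\textbf{t}_{:d}$, that sum is $\ge\|\hat{\mathcal{X}}\|_*$ by the definition of the nuclear $2$-norm. This gives the ``$\ge$'' direction, uniformly over all feasible $(\textbf{H},\{\textbf{R}_j\},\textbf{T})$.

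For the reverse inequality I would start from a rank-$D$ decomposition $\hat{\mathcal{X}}=\sum_{d=1}^D\textbf{h}_{:d}\otimes\textbf{r}_{:d}\otimes\textbf{t}_{:d}$ attaining $\|\hat{\mathcal{X}}\|_*$ (padding with all-zero columns if the optimal decomposition uses fewer than $D$ terms, which costs nothing) and then rebalance it: replacing $(\textbf{h}_{:d},\textbf{r}_{:d},\textbf{t}_{:d})$ by $(\alpha_d\textbf{h}_{:d},\beta_d\textbf{r}_{:d},\gamma_d\textbf{t}_{:d})$ with $\alpha_d\beta_d\gamma_d=1$ leaves both $\hat{\mathcal{X}}$ and each product $\|\textbf{h}_{:d}\|_2\|\textbf{r}_{:d}\|_2\|\textbf{t}_{:d}\|_2$ unchanged, so the scalars can be chosen to force the two norm-balancing conditions for every $d$ (the two conditions together pin down $\|\textbf{r}_{:d}\|_2=\sqrt{|\mathcal{R}|}$ and $\|\textbf{h}_{:d}\|_2=\|\textbf{t}_{:d}\|_2$, which determines $\alpha_d,\beta_d,\gamma_d$ up to sign on nonzero components). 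Reassembling the rescaled columns into $\textbf{H},\textbf{T}$ and the rows of $\widetilde{\textbf{R}}$ into diagonal $\textbf{R}_j$ produces a feasible point where every AM--GM step is an equality and the underlying decomposition still achieves the nuclear norm, so the objective equals its lower bound $\|\hat{\mathcal{X}}\|_*$; this proves the identity. Conversely, at any minimizer the whole ``$\ge$'' chain must be tight, which forces both conditions for all $d$, giving the ``only if'' part of the characterization.

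The step I expect to be the main obstacle is this tightness argument: one must check that the rescaling system for $(\alpha_d,\beta_d,\gamma_d)$ is always solvable and that degenerate rank-one components cause no trouble, and one must be careful that the displayed ``if and only if'' is really a statement about minimizers --- the two conditions capture exactly tightness of the two AM--GM steps, while the remaining ingredient (that the underlying rank-$D$ decomposition be nuclear-norm-optimal) is the other half of what being a minimizer entails. The separation in the first paragraph and the AM--GM bound itself are routine once the diagonal structure is used.
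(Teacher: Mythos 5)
Your proof follows essentially the same route as the paper's: use the diagonality of $\textbf{R}_j$ to collapse the objective into per-column terms, apply AM--GM to each of the two groups (yielding exactly the two stated balance conditions), and obtain attainability by rescaling an optimal CP decomposition, with your observation that the two conditions jointly force $\|\textbf{r}_{:d}\|_2=\sqrt{|\mathcal{R}|}$ and $\|\textbf{h}_{:d}\|_2=\|\textbf{t}_{:d}\|_2$ making the simultaneous-equality step slightly more explicit than the paper's ``in the same manner'' remark. The one caveat is shared with the paper's own proof because it stems from the statement itself: tracking the two AM--GM bounds literally gives a minimum of $4\|\hat{\mathcal{X}}\|_*$ under the stated $1/\sqrt{|\mathcal{R}|}$ normalization, a constant-factor bookkeeping issue you gloss with ``up to the normalizing constant'' exactly as the paper does.
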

\begin{proof}
 See the supplementary material.
\end{proof}

Therefore, DURA in \eqref{reg:rewrite} gives an upper bound to the tensor nuclear 2-norm, which is a tensor analog to the matrix trace norm.

\textbf{Remark}\hspace{2mm} DURA in \eqref{reg:dura} is actually a weighted version of the one in \eqref{reg:rewrite}, in which the regularization terms corresponding to the sampled valid triplets. As shown in \citet{weighted_reg} and \citet{n3}, the weighted versions of regularizers usually outperform the unweighted regularizer when entries of the matrix or tensor are sampled non-uniformly. Therefore, in the experiments, we implement DURA in a weighted way as in \eqref{reg:dura}. 

\section{Experiments} \label{sec:experiments}
In this section, we introduce the experimental settings in Section \ref{sec:exp_settting} and show the effectiveness of DURA in Section \ref{sec:main_results}. We compare DURA to other regularizers in Section \ref{sec:comp} and visualize the entity embeddings in Section \ref{sec:vis}. Finally, we analyze the sparsity induced by DURA in Section \ref{sec:sparsity}.  The code of DURA is available on GitHub at \url{https://github.com/MIRALab-USTC/KGE-DURA}.

\begin{wraptable}{r}{6.95cm}
\vspace{-5mm}
    \caption{Statistics of three benchmark datasets.}
    \vspace{-1.5mm}
    \label{table:datasets}
    \centering
    \setlength{\tabcolsep}{1.2mm}{
    \begin{tabular}{l *{3}{c}}
        \toprule
         & WN18RR & FB15k-237 & YAGO3-10 \\
        \midrule
        \#Entity & 40,943 & 14,541 & 123,182\\
        \#Relation & 11 & 237 & 37 \\
        \#Train & 86,835 & 272,115 & 1,079,040 \\
        \#Valid &3,034 &17,535 &5,000\\
        \#Test &3,134 &20,466 &5,000\\
        \bottomrule
    \end{tabular}
    }
    \vspace{-2mm}
\end{wraptable}

\subsection{Experimental Settings}\label{sec:exp_settting}
We consider three public knowledge graph datasets---WN18RR \cite{wn18rr}, FB15k-237 \cite{conve}, and YAGO3-10 \cite{yago3} for the knowledge graph completion task, which have been divided into training, validation, and testing set in previous works. The statistics of these datasets are shown in Table \ref{table:datasets}.
WN18RR, FB15k-237, and YAGO3-10 are extracted from WN18 \cite{transe}, FB15k \cite{transe}, and YAGO3 \cite{yago3}, respectively. \citet{wn18rr} and \citet{conve} indicated the test set leakage problem in WN18 and FB15k, where some test triplets may appear in the training dataset in the form of reciprocal relations. They created WN18RR and FB15k-237 to avoid the test set leakage problem, and we use them as the benchmark datasets. We use MRR and Hits@N (H@N) as evaluation metrics.
For more details of training and evaluation protocols, please refer to the supplementary material.

Moreover, we find it better to assign different weights for the parts involved with relations. That is, the optimization problem has the form of 
\begin{align*}
    \min &\sum_{(e_i,r_j,e_k)\in\mathcal{S}}[\ell_{ijk}(\textbf{H},\textbf{R}_1,\dots,\textbf{R}_J,\textbf{T})\\
    +&\lambda (\lambda_1(\|\textbf{h}_i\|_2^2+\|\textbf{t}_k\|_2^2)+\lambda_2(\|\textbf{h}_i\overline{\textbf{R}}_j\|_2^2+\|\textbf{t}_k\textbf{R}_j^\top\|_2^2))],
\end{align*}
where $\lambda, \lambda_1,\lambda_2>0$ are fixed hyperparameters. We search $\lambda$ in $\{0.005,0.01,0.05,0.1,0.5\}$ and $\lambda_1,\lambda_2$ in $\{0.5,1.0,1.5,2.0\}$.

\begin{table*}[ht]
    \centering
    \caption{Evaluation results on WN18RR, FB15k-237 and YAGO3-10 datasets. We reimplement CP, DistMult, ComplEx, and RESCAL using the ``reciprocal'' setting \cite{n3,simple}, which leads to better results than the reported results in the original paper. 
    }
    \label{table:main_results}
    \setlength{\tabcolsep}{1.5mm}{
    \begin{tabular}{l  c c c  c c c  c c c }
        \toprule
          &\multicolumn{3}{c}{\textbf{WN18RR}}&  \multicolumn{3}{c}{\textbf{FB15k-237}} & \multicolumn{3}{c}{\textbf{YAGO3-10}}\\
         \cmidrule(lr){2-4}
         \cmidrule(lr){5-7}
         \cmidrule(lr){8-10}
         & MRR & H@1 & H@10 & MRR & H@1 & H@10 & MRR & H@1 & H@10 \\
        \midrule
        RotatE & .476 & .428 & .571 & .338 & .241 & .533 & .495 & .402 & .670\\
        MuRP & .481 & .440 &.566 &.335 &.243 &.518 &- &- &-\\
        HAKE & .497 &.452 &\textbf{.582} &.346 &.250 &.542 & .546 &.462 &.694\\
        TuckER & .470 &.443 &.526 &.358 &.266 &.544 & - & -& -\\
        \midrule
        CP      &.438 &.414 &.485 &.333 &.247 &.508 &.567 &.494 &.698\\
        RESCAL  &.455 &.419 &.493 &.353 &.264 &.528 &.566 &.490 &.701\\
        ComplEx & .460  & .428 & .522  & .346 & .256 & .525 & .573 & .500 & .703\\
        \midrule
        CP-DURA  &.478 &.441 &.552 &.367  &.272 &.555 &.579 &.506 &.709\\
        RESCAL-DURA    &\textbf{.498} &\textbf{.455} &.577 &.368 &\textbf{.276} &.550 &.579 &.505 &.712\\
         ComplEx-DURA &.491 &.449 &.571 &\textbf{.371} &\textbf{.276} &\textbf{.560} &\textbf{.584} &\textbf{.511} &\textbf{.713}\\
        \bottomrule
    \end{tabular}
    }
    \vspace{-3mm}
\end{table*}

\subsection{Main Results}\label{sec:main_results}
In this section, we compare the performance of DURA against several state-of-the-art KGC models, including CP \cite{cp}, RESCAL \cite{rescal}, ComplEx \cite{complex}, TuckER \cite{tucker} and some DB models: RotatE \cite{rotate}, MuRP \cite{murp}, and HAKE \cite{hake}. 

Table \ref{table:main_results} shows the effectiveness of DURA. RESCAL-DURA and ComplEx-DURA perform competitively with the SOTA DB models. RESCAL-DURA outperforms all the compared DB models in terms of MRR and H@1. Note that we reimplement CP, ComplEx, and RESCAL under the ``reciprocal'' setting \cite{simple,n3}, and obtain better results than the reported performance in the original papers. 
Overall, TFB models with DURA significantly outperform those without DURA, which shows its effectiveness in preventing models from overfitting. 

Generally, models with more parameters and datasets with smaller sizes imply a larger risk of overfitting. Among the three datasets, WN18RR has the smallest size of only $11$ kinds of relations and around $80k$ training samples. Therefore, the improvements brought by DURA on WN18RR are expected to be larger compared with other datasets, which is consistent with the experiments.
As stated in  \citet{kge_survey}, RESCAL is a more expressive model, but it is prone to overfit on small- and medium-sized datasets because it represents relations with much more parameters. For example, on WN18RR dataset, RESCAL gets an H@10 score of 0.493, which is lower than ComplEx (0.522). The advantage of its expressiveness does not show up at all. However, incorporated with DURA, RESCAL gets an 8.4\% improvement on H@10 and finally attains 0.577, which outperforms all compared models. On larger datasets such as YAGO3-10, overfitting also exists but may be non-significant. Nonetheless, DURA still leads to consistent improvement, demonstrating the ability of DURA to prevent models from overfitting.

\begin{table*}[ht]
    \centering
    \caption{Comparison between DURA, the squared Frobenius norm (FRO), and the nuclear 3-norm (N3) regularizers. Results of * are taken from \citet{n3}. CP-N3 and ComplEx-N3 are re-implemented and their performances are better than the reported results in \citet{n3}. The best performance on each model are marked in bold.}
    \label{table:ablation_results}
    \setlength{\tabcolsep}{1.5mm}{
    \begin{tabular}{l  c c c  c c c  c c c }
        \toprule
          &\multicolumn{3}{c}{\textbf{WN18RR}}&  \multicolumn{3}{c}{\textbf{FB15k-237}} & \multicolumn{3}{c}{\textbf{YAGO3-10}}\\
         \cmidrule(lr){2-4}
         \cmidrule(lr){5-7}
         \cmidrule(lr){8-10}
         & MRR & H@1 & H@10 & MRR & H@1 & H@10 & MRR & H@1 & H@10 \\
        \midrule
        CP-FRO* &.460 &- &.480 &.340 &- &.510 &.540 &- &.680\\
        CP-N3 &.470  &.430   &.544 &.354 &.261  &.544 &.577 &.505  &.705  \\
        CP-DURA  &\textbf{.478} &\textbf{.441} &\textbf{.552} &\textbf{.367}  &\textbf{.272} &\textbf{.555} &\textbf{.579} &\textbf{.506}  &\textbf{.709}\\
        \midrule
        ComplEx-FRO* &.470 &- &.540 &.350 &- &.530 &.570 &- &.710 \\
        ComplEx-N3   &.489  &.443 &\textbf{.580}  &.366  &.271   &.558  &.577  &.502  &.711  \\
        ComplEx-DURA &\textbf{.491} &\textbf{.449} &.571 &\textbf{.371} &\textbf{.276}  &\textbf{.560} &\textbf{.584} &\textbf{.511} &\textbf{.713}\\
        \midrule
        RESCAL-FRO   &.397 &.363  &.452 & .323 &.235 &.501 &.474 &.392 &.628\\
        RESCAL-DURA    &\textbf{.498} &\textbf{.455} &\textbf{.577} &\textbf{.368} &\textbf{.276}  &\textbf{.550} &\textbf{.579} &\textbf{.505}  &\textbf{.712}\\
        \bottomrule
    \end{tabular}
    }
\end{table*}

\subsection{Comparison to Other Regularizers}\label{sec:comp}
In this section, we compare DURA to the popular squared Frobenius norm regularizer and the recent tensor nuclear 3-norm (N3) regularizer \cite{n3}.  
The squared Frobenius norm regularizer is given by $g(\hat{\mathcal{X}})= \|\textbf{H}\|_F^2+\|\textbf{T}\|_F^2+\sum_{j=1}^{|\mathcal{R}|}\|\textbf{R}_j\|_F^2$.
N3 regularizer is given by $g(\hat{\mathcal{X}})= \sum_{d=1}^D(\|\textbf{h}_{:d}\|_3^3+\|\textbf{r}_{:d}\|_3^3+\|\textbf{t}_{:d}\|_3^3)$,
where $\|\cdot\|_3$ denotes $L_3$ norm of vectors. 


We implement both the squared Frobenius norm (FRO) and N3 regularizers in the weighted way as stated in \citet{n3}.
Table \ref{table:ablation_results} shows the performance of the three regularizers on three popular models: CP, ComplEx, and RESCAL. Note that when the TFB model is RESCAL, we only compare DURA to the squared Frobenius norm regularization as N3 does not apply to it.

For CP and ComplEx, DURA brings consistent improvements compared to FRO and N3 on all datasets. Specifically, on FB15k-237, compared to CP-N3, CP-DURA gets an improvement of 0.013 in terms of MRR. Even for the previous state-of-the-art TFB model ComplEx, DURA brings further improvements against the N3 regularizer. Incorporated with FRO, RESCAL performs worse than the vanilla model, which is consistent with the results in \citet{old_dog}.  However, RESCAL-DURA brings significant improvements against RESCAL. All the results demonstrate that DURA is more widely applicable than N3 and more effective than the squared Frobenius norm regularizer.

\begin{wrapfigure}{r}{6.5cm}
\vspace{-5mm}
\begin{subfigure}{0.23\textwidth}
\centering
  \includegraphics[width=80pt]{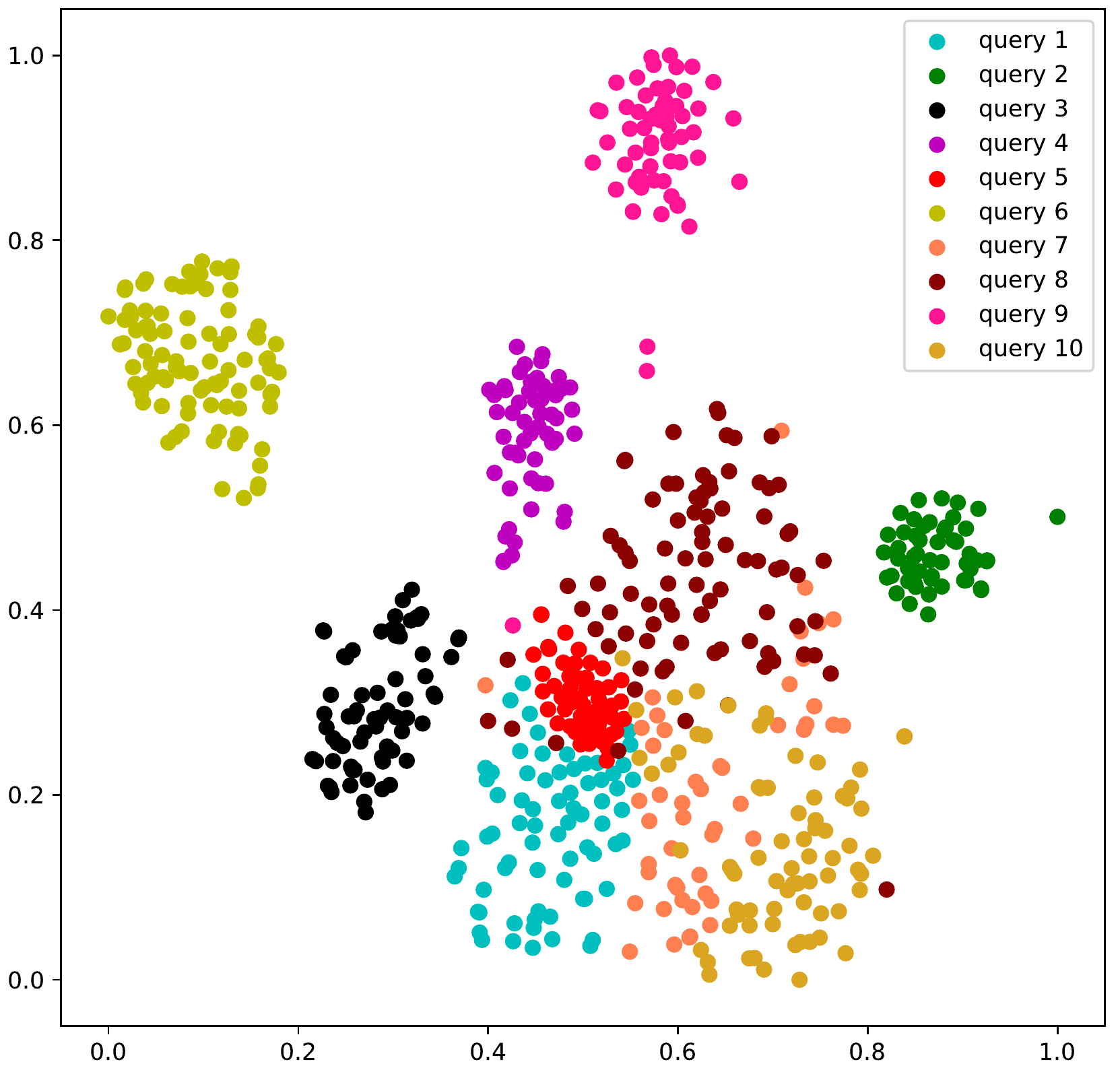}
  \caption{CP}\label{fig:na_tsne}
\end{subfigure}
\begin{subfigure}{0.23\textwidth}
\centering
  \includegraphics[width=80pt]{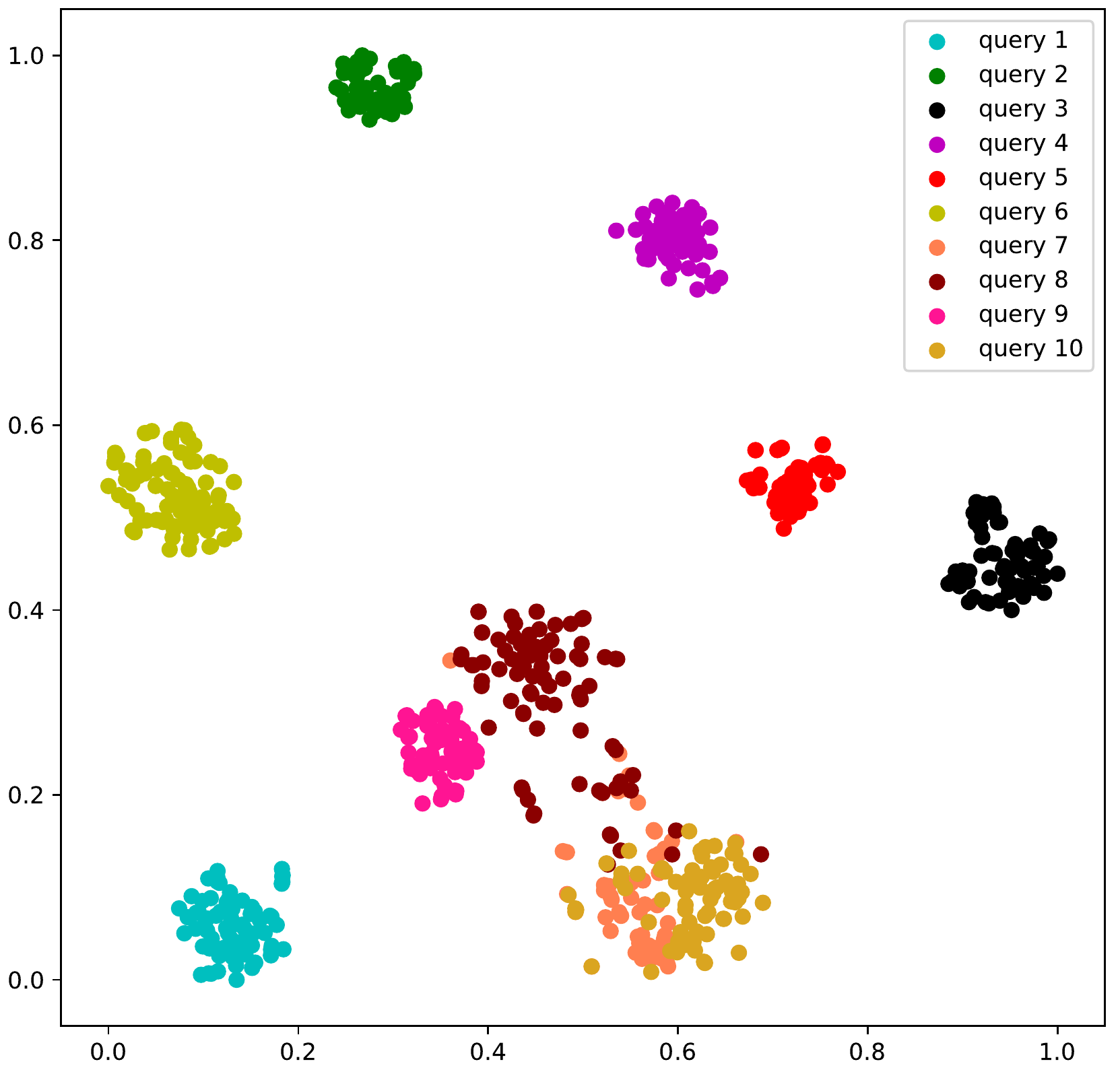}
  \caption{CP-DURA}\label{fig:dura_tsne}
\end{subfigure}
\vskip -0.05in
\caption{Visualization of the embeddings of tail entities using T-SNE. A point represents a tail entity. Points in the same color represent tail entities that have the same $(h_r,r_j)$ context.}
\label{fig:tsne}
\vskip -0.2in
\end{wrapfigure}

\subsection{Visualization}\label{sec:vis}
In this section, we visualize the tail entity embeddings using T-SNE \cite{tsne} to show that DURA encourages tail entities with similar semantics to have similar embeddings.

Suppose that $(h_i, r_j)$ is a \textit{query}, where $h_i$ and $r_j$ are head entities and relations, respectively. An entity $t_k$ is an \textit{answer} to a query $(h_i, r_j)$ if $(h_i, r_j,t_k)$ is valid. We randomly selected 10 queries in FB15k-237, each of which has more than 50 answers. \footnote[1]{For more details about the 10 queries, please refer to the supplementary material.} Then, we use T-SNE to visualize the answers' embeddings generated by CP and CP-DURA. Figure \ref{fig:tsne} shows the visualization results. Each entity is represented by a 2D point and points in the same color represent tail entities with the same $(h_i,r_j)$ context (i.e. query). Figure \ref{fig:tsne} shows that, with DURA, entities with the same $(h_i, r_j)$ contexts are indeed being assigned more similar representations, which verifies the claims in Section \ref{sec:why}.

\subsection{Sparsity Analysis}\label{sec:sparsity}
As real-world knowledge graphs usually contain billions of entities, the storage of entity embeddings faces severe challenges. Intuitively, if embeddings are sparse, that is, most of the entries are zero, we can store them with less storage. Therefore, the sparsity of the generated entity embeddings becomes crucial for real-world applications. In this part, we analyze the sparsity of embeddings induced by different regularizers. 

\begin{wrapfigure}{r}{7.6cm}
\vspace{-5mm}
\begin{subfigure}{0.27\textwidth}
  \includegraphics[width=130pt]{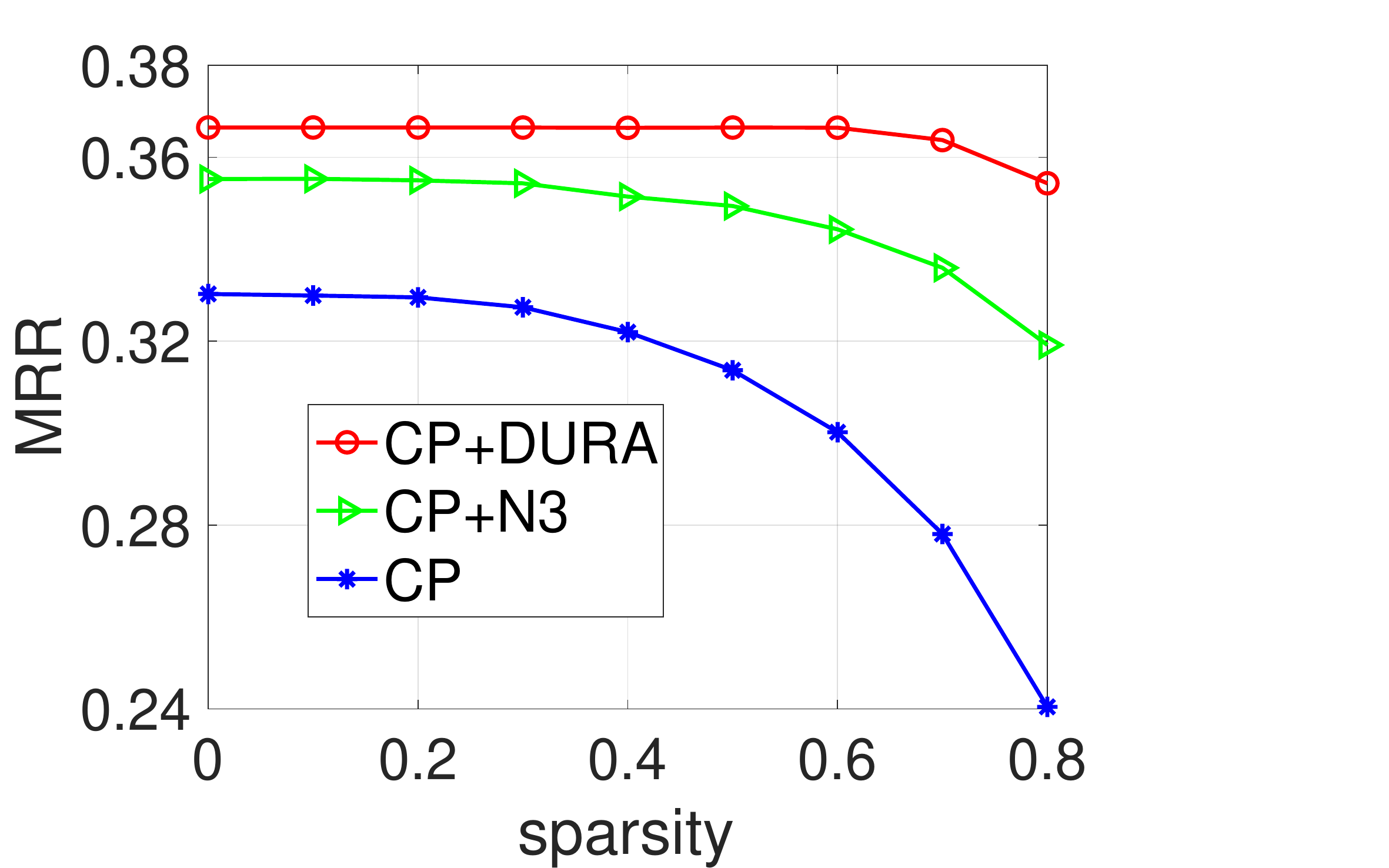}
  \caption{CP}\label{fig:cp_sparsity_mrr}
\end{subfigure}
\hspace{-2mm}
\medskip
\begin{subfigure}{0.27\textwidth}
  \includegraphics[width=140pt]{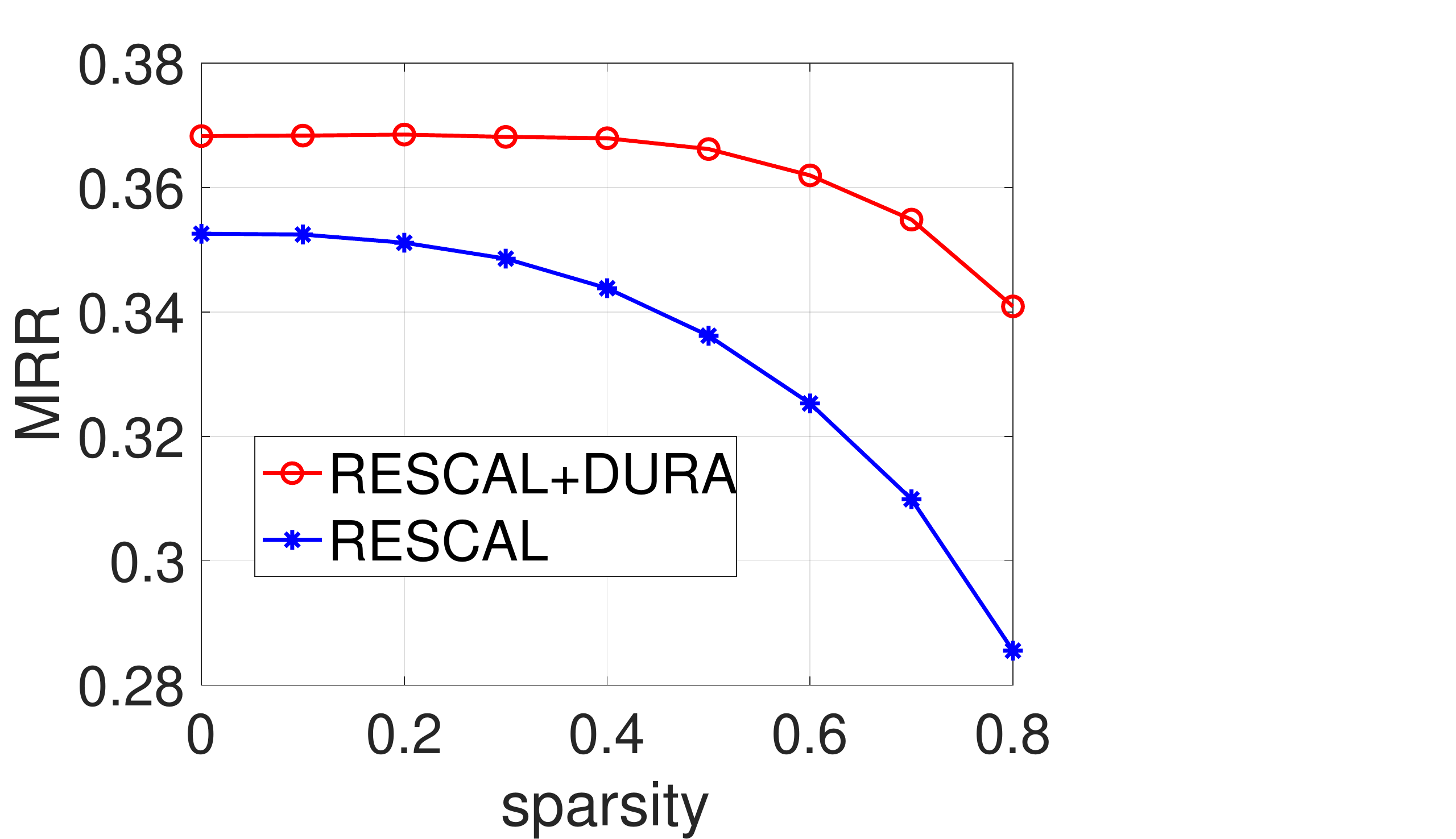}
  \caption{RESCAL}\label{fig:rescal_sparsity_mrr}
\end{subfigure}
\vskip -2mm
\caption{The effect of entity embeddings' $\lambda$-sparsity on MRR. The used dataset is FB15k-237.}
\label{fig:sparsity_mrr}
\vskip -0.15in
\end{wrapfigure}

Generally, there are few entries of entity embeddings that are exactly equal to 0 after training, which means that it is hard to obtain sparse entity embeddings directly. However, when we score triplets using the trained model, the embedding entries with values close to 0 will have minor contributions to the score of a triplet. If we set the embedding entries close to 0 to be exactly 0, we can transform embeddings into sparse ones. Thus, there is a trade-off between sparsity and performance decrement. 

We define the following $\lambda$-sparsity to indicate the proportion of entries that are close to zero:
\begin{align}\label{eqn:sparsity}
    s_{\lambda} = \frac{\sum_{i=1}^{I} \sum_{d=1}^{D}\mathbbm{1}_{\{|x| < \lambda\}}(\textbf{E}_{id})}{I \times D},
\end{align}
where $\textbf{E} \in \mathbb{R}^{I \times D}$ is the entity embedding matrix, $\textbf{E}_{id}$ is the entry in the $i$-th row and $d$-th column of $\textbf{E}$, $I$ is the number of entities, $D$ is the embedding dimension, and $\mathbbm{1}_\mathcal{C}(x)$ is the indicator function that takes value of 1 if $x\in\mathcal{C}$ or otherwise the value of $0$. 

To generate sparse version entity embeddings, following Equation \eqref{eqn:sparsity}, we select all the entries of entity embeddings---of which the absolute value are less than a threshold $\lambda$---and set them to be 0. Note that for any given $s_{\lambda}$, we can always find a proper threshold $\lambda$ to approximate it, as the formula is increasing with respect to $\lambda$. Then, we evaluate the quality of sparse version entity embeddings on the knowledge graph completion task. Figure \ref{fig:sparsity_mrr} shows the effect of entity embeddings' $\lambda$-sparsity on MRR.
Results in the figure show that DURA causes much gentler performance decrement as the embedding sparsity increases. In Figure \ref{fig:cp_sparsity_mrr}, incorporated with DURA, CP maintains MRR of 0.366 unchanged even when 60\% entries are set to 0. More surprisingly, when the sparsity reaches 70\%, CP-DURA can still outperform CP-N3 with zero sparsity. For RESCAL, when setting 80\% entries to be 0, RESCAL-DURA still has the MRR of 0.341, which significantly outperforms vanilla RESCAL, whose MRR has decreased from 0.352 to 0.286. In a word, incorporating with DURA regularizer, the performance of CP and RESCAL remains comparable to the state-of-the-art models, even when 70\% of entity embeddings' entries are set to 0. 

Following \citet{sparse}, we store the sparse version embedding matrices using compressed sparse row (CSR) format or compressed sparse column (CSC) format, which requires $2a + n + 1$ numbers, where $a$ is the number of non-zero elements and $n$ is the number of rows or columns. Experiments show that DURA brings about 65\% fewer storage costs for entity embeddings when 70\% of the entries are set to 0. Therefore, DURA can significantly reduce the storage usage while maintaining satisfying performance.

\section{Conclusion}
We propose a widely applicable and effective regularizer---namely, DURA---for tensor factorization based knowledge graph completion models. DURA is based on the observation that, for an existing tensor factorization based KGC model (primal), there is often another distance based KGC model (dual) closely associated with it. Experiments show that DURA brings consistent and significant improvements to TFB models on benchmark datasets. Moreover, visualization resultls show that DURA can encourage entities with similar semantics to have similar embeddings, which is beneficial to the prediction of unknown triplets. 




\newpage
\section*{Appendix}
\section*{A.\,\,\, Proof for Theorem 1}
\begin{theorem}\label{thm:main2}
Suppose that $\hat{\mathcal{X}}_j=\textbf{H}\textbf{R}_j\textbf{T}^\top$ for $j=1,2,\dots,|\mathcal{R}|$, where $\textbf{H},\textbf{T},\textbf{R}_j$ are real matrices and $\textbf{R}_j$ is diagonal. Then, the following equation holds
\begin{align*}
    \min_{\substack{\hat{\mathcal{X}}_j=\textbf{H} \textbf{R}_j\textbf{T}^\top\\  }}&\frac{1}{\sqrt{|\mathcal{R}}|}\sum_{j=1}^{|\mathcal{R}|}(\|\textbf{H}\textbf{R}_j\|_F^2+\|\textbf{T}\|_F^2+\|\textbf{T}\textbf{R}_j^\top\|_F^2+\|\textbf{H}\|_F^2)=\|\hat{\mathcal{X}}\|_*.
\end{align*}
The equation holds if and only if
$
  \|\textbf{h}_{:d}\|_2\|\textbf{r}_{:d}\|_2=\sqrt{|\mathcal{R}|}\|\textbf{t}_{:d}\|_2
$
and
$
  \|\textbf{t}_{:d}\|_2\|\textbf{r}_{:d}\|_2=\sqrt{|\mathcal{R}|}\|\textbf{h}_{:d}\|_2,
$
for all $\,d\in\{1,2,\ldots, D\}$,
where $\textbf{h}_{:d}$, $\textbf{r}_{:d}$, and $\textbf{t}_{:d}$ are the $d$-th columns of $\textbf{H}$, $\widetilde{\textbf{R}}$, and $\textbf{T}$, respectively.
\end{theorem}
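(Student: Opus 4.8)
The plan is to use the diagonality of the $\textbf{R}_j$ to collapse every Frobenius norm into products of $\ell_2$-norms of columns, which turns the claimed identity into a termwise AM--GM inequality whose equality case can be read off directly. Write $a_d=\|\textbf{h}_{:d}\|_2$, $b_d=\|\textbf{r}_{:d}\|_2$, $c_d=\|\textbf{t}_{:d}\|_2$ for the $d$-th columns of $\textbf{H},\widetilde{\textbf{R}},\textbf{T}$; because $\textbf{R}_j=\diag(\widetilde{\textbf{R}}(j,\cdot))$, these columns always supply a $D$-term CP decomposition $\hat{\mathcal{X}}=\sum_{d=1}^D\textbf{h}_{:d}\otimes\textbf{r}_{:d}\otimes\textbf{t}_{:d}$, and conversely every $D$-term decomposition of $\hat{\mathcal{X}}$ arises this way with diagonal $\textbf{R}_j$. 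The first step is the reduction: since $(\textbf{H}\textbf{R}_j)_{id}=\textbf{H}(i,d)\widetilde{\textbf{R}}(j,d)$, one gets $\|\textbf{H}\textbf{R}_j\|_F^2=\sum_d a_d^2\,\widetilde{\textbf{R}}(j,d)^2$ and $\|\textbf{T}\textbf{R}_j^\top\|_F^2=\sum_d c_d^2\,\widetilde{\textbf{R}}(j,d)^2$; summing over $j$ with $\sum_{j=1}^{|\mathcal{R}|}\widetilde{\textbf{R}}(j,d)^2=b_d^2$, $\sum_j\|\textbf{T}\|_F^2=|\mathcal{R}|\sum_d c_d^2$, and $\sum_j\|\textbf{H}\|_F^2=|\mathcal{R}|\sum_d a_d^2$ produces
\begin{align*}
\sum_{j=1}^{|\mathcal{R}|}\bigl(\|\textbf{H}\textbf{R}_j\|_F^2+\|\textbf{T}\|_F^2+\|\textbf{T}\textbf{R}_j^\top\|_F^2+\|\textbf{H}\|_F^2\bigr)=\sum_{d=1}^{D}\bigl(a_d^2+c_d^2\bigr)\bigl(b_d^2+|\mathcal{R}|\bigr).
\end{align*}

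Next I would prove the lower bound. For each $d$, AM--GM gives $a_d^2+c_d^2\ge 2a_dc_d$ and $b_d^2+|\mathcal{R}|\ge 2\sqrt{|\mathcal{R}|}\,b_d$, hence each summand on the right is at least $4\sqrt{|\mathcal{R}|}\,a_db_dc_d$; dividing by $\sqrt{|\mathcal{R}|}$ and summing, the objective is bounded below by a fixed multiple of $\sum_d a_db_dc_d$, which by the definition of the nuclear $2$-norm (the columns being an admissible $D$-term decomposition) is at least the same multiple of $\|\hat{\mathcal{X}}\|_*$. The same computation isolates the equality case: the first AM--GM is tight iff $a_d=c_d$ and the second iff $b_d=\sqrt{|\mathcal{R}|}$, and this pair is equivalent to the conditions stated in the theorem --- multiplying $\|\textbf{h}_{:d}\|_2\|\textbf{r}_{:d}\|_2=\sqrt{|\mathcal{R}|}\|\textbf{t}_{:d}\|_2$ and $\|\textbf{t}_{:d}\|_2\|\textbf{r}_{:d}\|_2=\sqrt{|\mathcal{R}|}\|\textbf{h}_{:d}\|_2$ yields $b_d^2=|\mathcal{R}|$, dividing them yields $a_d=c_d$, and the converse is immediate.

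For the attainment direction I would start from a nuclear-norm-optimal $D$-term decomposition $\hat{\mathcal{X}}=\sum_{d=1}^D\textbf{u}_{1,d}\otimes\textbf{u}_{2,d}\otimes\textbf{u}_{3,d}$ with $\sum_d\|\textbf{u}_{1,d}\|_2\|\textbf{u}_{2,d}\|_2\|\textbf{u}_{3,d}\|_2=\|\hat{\mathcal{X}}\|_*$ and exploit the scaling gauge: replacing the $d$-th triple by $(\alpha_d\textbf{u}_{1,d},\beta_d\textbf{u}_{2,d},\gamma_d\textbf{u}_{3,d})$ with $\alpha_d\beta_d\gamma_d=1$ leaves $\hat{\mathcal{X}}$ and each product $a_db_dc_d$ fixed, and for every column with $b_d\neq 0$ the multipliers can be chosen so that both AM--GM inequalities become equalities ($a_d=c_d$, $b_d=\sqrt{|\mathcal{R}|}$); columns with $\textbf{u}_{2,d}=0$ contribute nothing and can be zeroed. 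Reassembling $\textbf{H},\widetilde{\textbf{R}},\textbf{T}$ from the rescaled vectors and setting $\textbf{R}_j=\diag(\widetilde{\textbf{R}}(j,\cdot))$ gives a feasible point at which the Step-1 identity and the now-tight Step-2 bound evaluate to $\|\hat{\mathcal{X}}\|_*$, yielding both the reverse inequality and the ``if'' half of the attainment statement; the ``only if'' half follows from the equality analysis, bearing in mind that a global minimizer additionally forces the underlying decomposition to attain $\|\hat{\mathcal{X}}\|_*$.

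The individual computations are routine, so the real care lies at the interface: confirming that the minimization over factorizations $\hat{\mathcal{X}}_j=\textbf{H}\textbf{R}_j\textbf{T}^\top$ with diagonal $\textbf{R}_j$ and the minimization defining $\|\hat{\mathcal{X}}\|_*$ range over the same family of $D$-term decompositions, handling the gauge rescaling and the degenerate columns so that the equality conditions come out exactly as stated, and tracking the overall numerical constant through Steps 1 and 2 so that it reconciles with the normalization $1/\sqrt{|\mathcal{R}|}$ appearing in the statement --- this last point is the one most prone to an off-by-a-factor slip and warrants explicit verification.
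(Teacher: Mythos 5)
Your proposal is correct and follows essentially the same route as the paper's proof: collapse the Frobenius norms via the diagonal $\textbf{R}_j$ into products of column norms, apply AM--GM termwise, and attain the bound by gauge-rescaling an optimal $D$-term CP decomposition, with your equality conditions $\|\textbf{h}_{:d}\|_2=\|\textbf{t}_{:d}\|_2$ and $\|\textbf{r}_{:d}\|_2=\sqrt{|\mathcal{R}|}$ equivalent to the stated ones (the paper merely handles the two pairs $\|\textbf{H}\textbf{R}_j\|_F^2+\|\textbf{T}\|_F^2$ and $\|\textbf{T}\textbf{R}_j^\top\|_F^2+\|\textbf{H}\|_F^2$ separately instead of all four terms at once). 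The constant you flag at the end is a real discrepancy but not a flaw in your argument: your (correct) computation gives $4\|\hat{\mathcal{X}}\|_*$ for the $1/\sqrt{|\mathcal{R}|}$-normalized four-term sum, and the paper's own proof likewise yields $2\|\hat{\mathcal{X}}\|_*$ per two-term block before silently dropping the factor, so the mismatch lies in the theorem's displayed constant (immaterial for the upper-bound/regularization interpretation), not in your proof.
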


\begin{proof}
We have that
\begin{align*}
    &\sum_{j=1}^{|\mathcal{R}|}\left(\|\textbf{H}\textbf{R}_j\|_F^2+\|\textbf{T}\|_F^2\right)\\
    =&\sum_{j=1}^{|\mathcal{R}|}\left(\sum_{i=1}^I\|\textbf{h}_{i}\circ \textbf{r}_{j}\|_F^2+\sum_{d=1}^D\|\textbf{t}_{:d}\|_F^2\right)\\
    =&\sum_{j=1}^{|\mathcal{R}|}\left(\sum_{d=1}^D\|\textbf{t}_{:d}\|_F^2+\sum_{i=1}^I\sum_{d=1}^D\textbf{h}_{id}^2\textbf{r}_{jd}^2\right)\\
    =&\sum_{j=1}^{|\mathcal{R}|}\sum_{d=1}^D\|\textbf{t}_{:d}\|_2^2+\sum_{d=1}^D\|\textbf{h}_{:d}\|_2^2\|\textbf{r}_{:d}\|_2^2\\
    =&\sum_{d=1}^D(\|\textbf{h}_{:d}\|_2^2\|\textbf{r}_{:d}\|_2^2+|\mathcal{R}|\|\textbf{t}_{:d}\|_2^2)\\
    \ge& \sum_{d=1}^D2\sqrt{|\mathcal{R}|}\|\textbf{h}_{:d}\|_2\|\textbf{r}_{:d}\|_2\|\textbf{t}_{:d}\|_2\\
    =&2\sqrt{|\mathcal{R}|}\sum_{d=1}^D\|\textbf{h}_{:d}\|_2\|\textbf{r}_{:d}\|_2\|\textbf{t}_{:d}\|_2.
\end{align*}
The equality holds if and only if $\|\textbf{h}_{:d}\|_2^2\|\textbf{r}_{:d}\|_2^2=|\mathcal{R}|\|\textbf{t}_{:d}\|_2^2$, i.e., $\|\textbf{h}_{:d}\|_2\|\textbf{r}_{:d}\|_2=\sqrt{|\mathcal{R}|}\|\textbf{t}_{:d}\|_2$.

For all CP decomposition $\hat{\mathcal{X}}=\sum_{d=1}^D \textbf{h}_{:d} \otimes \textbf{r}_{:d} \otimes \textbf{t}_{:d}$,  we can always let $\textbf{h}'_{:d}=\textbf{h}_{:d}$, $\textbf{r}'_{:d}=\sqrt{\frac{\|\textbf{t}_d\|_2\sqrt{|\mathcal{R}|}}{\|\textbf{h}_{:d}\|_2\|\textbf{r}_{:d}\|_2}}\textbf{r}_{:d}$ and $\textbf{t}'_{:d}= \sqrt{\frac{\|\textbf{h}_{:d}\|_2\|\textbf{r}_{:d}\|_2}{\|\textbf{t}_{:d}\|_2\sqrt{|\mathcal{R}|}}}\textbf{t}_{:d}$ such that
$$\|\textbf{h}'_{:d}\|_2\|\textbf{r}'_{:d}\|_2=\sqrt{|\mathcal{R}|}\|\textbf{t}'_{:d}\|_2,$$
and meanwhile ensure that $\hat{\mathcal{X}}=\sum_{d=1}^D \textbf{h}'_{:d} \otimes \textbf{r}'_{:d} \otimes \textbf{t}'_{:d}$. Therefore, we know that
\begin{align*}
    \frac{1}{\sqrt{|\mathcal{R}|}}\sum_{j=1}^{|\mathcal{R}|}\|\hat{\mathcal{X}}_j\|_*&=\frac{1}{2\sqrt{|\mathcal{R}|}}\sum_{j=1}^{|\mathcal{R}|}\min_{\hat{\mathcal{X}}_j=\textbf{H}\textbf{R}_j\textbf{T}^\top}(\|\textbf{H}\textbf{R}_j\|_F^2+\|\textbf{T}\|_F^2)\\
    &\le\frac{1}{2\sqrt{|\mathcal{R}|}}\min_{\hat{\mathcal{X}}_j=\textbf{H}\textbf{R}_j\textbf{T}^\top}\sum_{j=1}^{|\mathcal{R}|}(\|\textbf{H}\textbf{R}_j\|_F^2+\|\textbf{T}\|_F^2)\\
    &= \min_{\hat{\mathcal{X}}=\sum_{d=1}^D \textbf{h}_{:d}\otimes \textbf{r}_{:d}\otimes \textbf{t}_{:d}}\sum_{d=1}^D\|\textbf{h}_{:d}\|_2\|\textbf{r}_{:d}\|_2\|\textbf{t}_{:d}\|_2\\
    &=\|\hat{\mathcal{X}}\|_*.
\end{align*}
In the same manner, we know that
\begin{align*}
    \frac{1}{2\sqrt{|\mathcal{R}|}}\min_{\hat{\mathcal{X}}_j=\textbf{H}\textbf{R}_j\textbf{T}^\top}\sum_{j=1}^{|\mathcal{R}|}(\|\textbf{T}\textbf{R}_j^\top\|_F^2+\|\textbf{H}\|_F^2)=\|\hat{\mathcal{X}}\|_*.
\end{align*}
The equality holds if and only if $ \|\textbf{t}_{:d}\|_2\|\textbf{r}_{:d}\|_2=\sqrt{|\mathcal{R}|}\|\textbf{h}_{:d}\|_2$.

Therefore, the conclusion holds if and only if $
  \|\textbf{h}_{:d}\|_2\|\textbf{r}_{:d}\|_2=\sqrt{|\mathcal{R}|}\|\textbf{t}_{:d}\|_2
$
and
$
  \|\textbf{t}_{:d}\|_2\|\textbf{r}_{:d}\|_2=\sqrt{|\mathcal{R}|}\|\textbf{h}_{:d}\|_2,
$
$\,\forall\,d\in\{1,2,\ldots, D\}$.
\end{proof}

Therefore, for DURA, we know that 
\begin{align*}
     \min_{\substack{\hat{\mathcal{X}}_j=\textbf{H} \textbf{R}_j\textbf{T}^\top\\  }}\frac{1}{\sqrt{|\mathcal{R}|}}g(\hat{\mathcal{X}})=\|\hat{\mathcal{X}}\|_*,
\end{align*}
which completes the proof.

\begin{table}[ht]
    \centering
    \caption{Comparison to Reg\_p1. ``R'': RESCAL. ``C'': ComplEx. }    \label{table:cmp_results}
        \vskip 0.1in
        \begin{tabular}{l  c c c  c c c }
            \toprule
              &\multicolumn{3}{c}{\textbf{WN18RR}}&  \multicolumn{3}{c}{\textbf{FB15k-237}} \\
             \cmidrule(lr){2-4}
             \cmidrule(lr){5-7}
             & MRR & H@1 & H@10 & MRR & H@1 & H@10 \\
            \midrule
            R-Reg\_p1   &.281 &.220 &.394 &.310 &.228 &.338\\
            C-Reg\_p1   &.409 &.393 &.439 &.316 &.229 &.487\\
            \midrule
            R-DURA    &\textbf{.498} &\textbf{.455} &.577 &.368 &\textbf{.276} &.550 \\
            C-DURA &.491 &.449 &.571 &\textbf{.371} &\textbf{.276} &\textbf{.560} \\
            \bottomrule
        \end{tabular}
\end{table}

\section*{B.\,\,\, The optimal value of p}
In DB models, the commonly used $p$ is either $1$ or $2$. When $p=2$, DURA takes the form as the one in Equation (8) in the main text. If $p=1$, we cannot expand the squared score function of the associated DB models as in Equation (4).
Thus, the induced regularizer takes the form of $\sum_{(h_i,r_j,t_k)\in\mathcal{S}}\|\textbf{h}_i\bar{\textbf{R}}_j-\textbf{t}_k\|_1+\|\textbf{t}_k\textbf{R}_j^\top-\textbf{h}_i\|_1$. The above regularizer with $p=1$ (Reg\_p1) does not gives an upper bound on the tensor nuclear-2 norm as in Theorem 1. Table \ref{table:cmp_results} shows that, DURA significantly outperforms Reg\_p1 on WN18RR and FB15k-237. Therefore, we choose $p=2$.

\section*{C.\,\,\, Computational Complexity}
Suppose that $k$ is the number of triplets known to be true in the knowledge graph, $n$ is the embedding dimension of entities. Then, for CP and ComplEx, the complexity of DURA is $O(kn)$; for RESCAL, the complexity of DURA is $O(kn^2)$. That is to say, the computational complexity of weighted DURA is the same as the weighted squared Frobenius norm regularizer. 

\section*{D.\,\,\, More Details About Experiments}
In this section, we introduce the training protocol and the evaluation protocol.
\begin{table}[t]
    \centering
    \caption{Hyperparameters found by grid search. $k$ is the embedding size, $b$ is the batch size, $\lambda$ is the regularization coefficients, and $\lambda_1$ and $\lambda_2$ are weights for different parts of the regularizer.}
    \label{table:hp}
    \vskip 0.1in
    \setlength{\tabcolsep}{1.25mm}{
    \begin{tabular}{l  c c c c c  c c c c c  c c c c c }
    \toprule
        &\multicolumn{5}{c}{\textbf{WN18RR}}&  \multicolumn{5}{c}{\textbf{FB15k-237}} & \multicolumn{5}{c}{\textbf{YAGO3-10}}\\
         \cmidrule(lr){2-6}
         \cmidrule(lr){7-11}
         \cmidrule(lr){12-16}
         & $k$ & $b$ & $\lambda$ & $\lambda_1$ &$\lambda_2$ & $k$ & $b$ & $\lambda$ & $\lambda_1$ &$\lambda_2$ & $k$ & $b$ & $\lambda$ & $\lambda_1$ &$\lambda_2$ \\
        \midrule
        CP & 2000 & 100 & 1e-1 & 0.5 & 1.5 & 2000 & 100 & 5e-2 & 0.5 & 1.5 & 1000 & 1000 & 5e-3 & 0.5 & 1.5\\
        ComplEx & 2000 & 100 & 1e-1 & 0.5  &1.5 & 2000 & 100 & 5e-2 & 0.5 & 1.5 & 1000 & 1000 & 5e-2 & 0.5 & 1.5\\
        RESCAL & 512 & 1024 &1e-1 & 1.0 & 1.0 & 512 & 512 &1e-1 & 2.0 & 1.5 & 512 & 1024 &5e-2 & 1.0 & 1.0\\
        \bottomrule
    \end{tabular}
    }
   
\end{table}
\subsection*{D.1\,\,\, Training Protocol} 
We adopt the cross entropy loss function for all considered models as suggested in \citet{old_dog}. We adopt the ``reciprocal'' setting that creates a new triplet $(e_k,r_j^{-1},e_i)$ for each triplet  $(e_i,r_j,e_k)$ \cite{n3,simple}. We use Adagrad \citep{adagrad} as the optimizer, and use grid search to find the best hyperparameters based on the performance on the validation datasets. Specifically, we search learning rates in $\{0.1, 0.01\}$, regularization coefficients in $\{0, 1\times 10^{-3}, 5\times 10^{-3}, 1\times 10^{-2}, 5\times 10^{-2}, 1\times 10^{-1}, 5\times 10^{-1}\}$. On WN18RR and FB15k-237, we search batch sizes in $\{100,500,1000\}$ and embedding sizes in $\{500,1000,2000\}$. On YAGO3-10, we search batch sizes in $\{256,512,1024\}$ and embedding sizes in $\{500,1000\}$. 
We search both $\lambda_1$ and $\lambda_2$ in $\{0.5, 1.0, 1.5, 2.0\}$.

We implement DURA in PyTorch and run on all experiments with a single NVIDIA GeForce RTX 2080Ti graphics card.

As we regard the link prediction as a multi-class classification problem and adopt the cross entropy loss, we can assign different weights for different classes (i.e., tail entities) based on their frequency of occurrence in the training dataset. Specifically, suppose that the loss of a given query $(h,r,?)$ is $\ell((h,r,?),t)$, where $t$ is the true tail entity, then the weighted loss is 
\begin{align*}
    w(t)\ell((h,r,?),t),
\end{align*}
where 
\begin{align*}
    w(t)=w_0\frac{\#t}{\max\{\#t_i:t_i \in \text{training set}\}} + (1-w_0),
\end{align*}
$w_0$ is a fixed number, $\#t$ denotes the frequency of occurrence in the training set of the entity $t$. For all models on WN18RR and RESCAL on YAGO3-10, we choose $w_0=0.1$ and for all the other cases we choose $w_0=0$.

We choose a learning rate of $0.1$ after grid search. Table \ref{table:hp} shows the other best hyperparameters for DURA found by grid search. Please refer to the Experiments part in the main text for the search range of the hyperparameters.

\subsection*{D.2\,\,\, Evaluation Protocol} 
Following \citet{transe}, we use entity ranking as the evaluation task. For each triplet $(h_i,r_j,t_k)$ in the test dataset, the model is asked to answer $(h_i, r_j, ?)$ and $(t_k, r_j^{-1}, ?)$. To do this, we fill the positions of missing entities with candidate entities to create a set of candidate triplets, and then rank the triplets in descending order by their scores. Following the ``Filtered'' setting in \citet{transe}, we then filter out all existing triplets known to be true at ranking. We choose Mean Reciprocal Rank (MRR) and Hits at N (H@N) as the evaluation metrics. Higher MRR or H@N indicates better performance. Detailed definitions are as follows.
\begin{itemize}
    \item The mean reciprocal rank is the average of the reciprocal ranks of results for a sample of queries Q:
    \begin{align*}
        \text{MRR}=\frac{1}{|Q|}\sum_{i=1}^{|Q|}\frac{1}{\text{rank}_i}.
    \end{align*}
    \item The Hits@N is the ratio of ranks that no more than $N$:
    \begin{align*}
        \text{Hits@N}=\frac{1}{|Q|}\sum_{i=1}^{|Q|}\mathbbm{1}_{x\le N}(\text{rank}_i),
    \end{align*}
    where $\mathbbm{1}_{x\le N}(\text{rank}_i)=1$ if $\text{rank}_i\le N$ or otherwise $\mathbbm{1}_{x\le N}(\text{rank}_i)=0$.
\end{itemize}

\subsection*{D.3\,\,\, The queries in T-SNE visualization}
In Table \ref{table:query_name}, we list the ten queries used in the T-SNE visualization (Section 5.4 in the main text). Note that a query is represented as $(h, r, ?)$, where $h$ denotes the head entity and $r$ denotes the relation.
\begin{table}[h]
    \centering
    \caption{The queries in T-SNE visualizations.}
    \label{table:query_name}
    \vskip 0.1in
    \resizebox{1 \columnwidth}!{
    \begin{tabular}{c l}
    \toprule
        \textbf{Index} & \textbf{Query} \\
        \midrule
        1 & (political drama, \text{/media\_common/netflix\_genre/titles}, ?)\\
        2 & (Academy Award for Best Original Song, \text{/award/award\_category/winners./award/award\_honor/ceremony},?)\\
        3 & (Germany, \text{/location/location/contains},?) \\
        4 & (doctoral degree ,  \text{/education/educational\_degree/people\_with\_this\_degree./education/education/major\_field\_of\_study},?)\\
        5 & (broccoli, \text{/food/food/nutrients./food/nutrition\_fact/nutrient},?)\\
        6 & (shooting sport, \text{/olympics/olympic\_sport/athletes./olympics/olympic\_athlete\_affiliation/country},?) \\
        7 & (synthpop, \text{/music/genre/artists}, ?)\\
        8 & (Italian American, \text{/people/ethnicity/people},?)\\
        9 & (organ, \text{/music/performance\_role/track\_performances./music/track\_contribution/role}, ?)\\
        10 & (funk, \text{/music/genre/artists}, ?)\\
        \bottomrule
    \end{tabular}
    }
\end{table}

\bibliography{neurips_2020}
\bibliographystyle{neurips_2020}

\end{document}


\maketitle

\section{Proof for Theorem 1}
\begin{theorem}\label{thm:main}
Suppose that $\hat{\mathcal{X}}_j=\textbf{H}\textbf{R}_j\textbf{T}^\top$ for $j=1,2,\dots,|\mathcal{R}|$, where $\textbf{H},\textbf{T},\textbf{R}_j$ are real matrices and $\textbf{R}_j$ is diagonal. Then, the following equation holds
\begin{align*}
    \min_{\substack{\hat{\mathcal{X}}_j=\textbf{H} \textbf{R}_j\textbf{T}^\top\\  }}&\frac{1}{\sqrt{|\mathcal{R}}|}\sum_{j=1}^{|\mathcal{R}|}(\|\textbf{H}\textbf{R}_j\|_F^2+\|\textbf{T}\|_F^2+\|\textbf{T}\textbf{R}_j^\top\|_F^2+\|\textbf{H}\|_F^2)=\|\hat{\mathcal{X}}\|_*.
\end{align*}
The equation holds if and only if
$
  \|\textbf{h}_{:d}\|_2\|\textbf{r}_{:d}\|_2=\sqrt{|\mathcal{R}|}\|\textbf{t}_{:d}\|_2
$
and
$
  \|\textbf{t}_{:d}\|_2\|\textbf{r}_{:d}\|_2=\sqrt{|\mathcal{R}|}\|\textbf{h}_{:d}\|_2,
$
for all $\,d\in\{1,2,\ldots, D\}$,
where $\textbf{h}_{:d}$, $\textbf{r}_{:d}$, and $\textbf{t}_{:d}$ are the $d$-th columns of $\textbf{H}$, $\widetilde{\textbf{R}}$, and $\textbf{T}$, respectively.
\end{theorem}

\begin{proof}
We have that
\begin{align*}
    &\sum_{j=1}^{|\mathcal{R}|}\left(\|\textbf{H}\textbf{R}_j\|_F^2+\|\textbf{T}\|_F^2\right)\\
    =&\sum_{j=1}^{|\mathcal{R}|}\left(\sum_{i=1}^I\|\textbf{h}_{i}\circ \textbf{r}_{j}\|_F^2+\sum_{d=1}^D\|\textbf{t}_{:d}\|_F^2\right)\\
    =&\sum_{j=1}^{|\mathcal{R}|}\left(\sum_{d=1}^D\|\textbf{t}_{:d}\|_F^2+\sum_{i=1}^I\sum_{d=1}^D\textbf{h}_{id}^2\textbf{r}_{jd}^2\right)\\
    =&\sum_{j=1}^{|\mathcal{R}|}\sum_{d=1}^D\|\textbf{t}_{:d}\|_2^2+\sum_{d=1}^D\|\textbf{h}_{:d}\|_2^2\|\textbf{r}_{:d}\|_2^2\\
    =&\sum_{d=1}^D(\|\textbf{h}_{:d}\|_2^2\|\textbf{r}_{:d}\|_2^2+|\mathcal{R}|\|\textbf{t}_{:d}\|_2^2)\\
    \ge& \sum_{d=1}^D2\sqrt{|\mathcal{R}|}\|\textbf{h}_{:d}\|_2\|\textbf{r}_{:d}\|_2\|\textbf{t}_{:d}\|_2\\
    =&2\sqrt{|\mathcal{R}|}\sum_{d=1}^D\|\textbf{h}_{:d}\|_2\|\textbf{r}_{:d}\|_2\|\textbf{t}_{:d}\|_2.
\end{align*}
The equality holds if and only if $\|\textbf{h}_{:d}\|_2^2\|\textbf{r}_{:d}\|_2^2=|\mathcal{R}|\|\textbf{t}_{:d}\|_2^2$, i.e., $\|\textbf{h}_{:d}\|_2\|\textbf{r}_{:d}\|_2=\sqrt{|\mathcal{R}|}\|\textbf{t}_{:d}\|_2$.

For all CP decomposition $\hat{\mathcal{X}}=\sum_{d=1}^D \textbf{h}_{:d} \otimes \textbf{r}_{:d} \otimes \textbf{t}_{:d}$,  we can always let $\textbf{h}'_{:d}=\textbf{h}_{:d}$, $\textbf{r}'_{:d}=\sqrt{\frac{\|\textbf{t}_d\|_2\sqrt{|\mathcal{R}|}}{\|\textbf{h}_{:d}\|_2\|\textbf{r}_{:d}\|_2}}\textbf{r}_{:d}$ and $\textbf{t}'_{:d}= \sqrt{\frac{\|\textbf{h}_{:d}\|_2\|\textbf{r}_{:d}\|_2}{\|\textbf{t}_{:d}\|_2\sqrt{|\mathcal{R}|}}}\textbf{t}_{:d}$ such that
$$\|\textbf{h}'_{:d}\|_2\|\textbf{r}'_{:d}\|_2=\sqrt{|\mathcal{R}|}\|\textbf{t}'_{:d}\|_2,$$
and meanwhile ensure that $\hat{\mathcal{X}}=\sum_{d=1}^D \textbf{h}'_{:d} \otimes \textbf{r}'_{:d} \otimes \textbf{t}'_{:d}$. Therefore, we know that
\begin{align*}
    \frac{1}{\sqrt{|\mathcal{R}|}}\sum_{j=1}^{|\mathcal{R}|}\|\hat{\mathcal{X}}_j\|_*&=\frac{1}{2\sqrt{|\mathcal{R}|}}\sum_{j=1}^{|\mathcal{R}|}\min_{\hat{\mathcal{X}}_j=\textbf{H}\textbf{R}_j\textbf{T}^\top}(\|\textbf{H}\textbf{R}_j\|_F^2+\|\textbf{T}\|_F^2)\\
    &\le\frac{1}{2\sqrt{|\mathcal{R}|}}\min_{\hat{\mathcal{X}}_j=\textbf{H}\textbf{R}_j\textbf{T}^\top}\sum_{j=1}^{|\mathcal{R}|}(\|\textbf{H}\textbf{R}_j\|_F^2+\|\textbf{T}\|_F^2)\\
    &= \min_{\hat{\mathcal{X}}=\sum_{d=1}^D \textbf{h}_{:d}\otimes \textbf{r}_{:d}\otimes \textbf{t}_{:d}}\sum_{d=1}^D\|\textbf{h}_{:d}\|_2\|\textbf{r}_{:d}\|_2\|\textbf{t}_{:d}\|_2\\
    &=\|\hat{\mathcal{X}}\|_*.
\end{align*}
In the same manner, we know that
\begin{align*}
    \frac{1}{2\sqrt{|\mathcal{R}|}}\min_{\hat{\mathcal{X}}_j=\textbf{H}\textbf{R}_j\textbf{T}^\top}\sum_{j=1}^{|\mathcal{R}|}(\|\textbf{T}\textbf{R}_j^\top\|_F^2+\|\textbf{H}\|_F^2)=\|\hat{\mathcal{X}}\|_*.
\end{align*}
The equality holds if and only if $ \|\textbf{t}_{:d}\|_2\|\textbf{r}_{:d}\|_2=\sqrt{|\mathcal{R}|}\|\textbf{h}_{:d}\|_2$.

Therefore, the conclusion holds if and only if $
  \|\textbf{h}_{:d}\|_2\|\textbf{r}_{:d}\|_2=\sqrt{|\mathcal{R}|}\|\textbf{t}_{:d}\|_2
$
and
$
  \|\textbf{t}_{:d}\|_2\|\textbf{r}_{:d}\|_2=\sqrt{|\mathcal{R}|}\|\textbf{h}_{:d}\|_2,
$
$\,\forall\,d\in\{1,2,\ldots, D\}$.
\end{proof}

Therefore, for DURA, we know that 
\begin{align*}
     \min_{\substack{\hat{\mathcal{X}}_j=\textbf{H} \textbf{R}_j\textbf{T}^\top\\  }}\frac{1}{\sqrt{|\mathcal{R}|}}g(\hat{\mathcal{X}})=\|\hat{\mathcal{X}}\|_*,
\end{align*}
which completes the proof.


\begin{table}[ht]
    \centering
    \caption{Comparison to Reg\_p1. ``R'': RESCAL. ``C'': ComplEx. }    \label{table:cmp_results}
        \vskip 0.1in
        \begin{tabular}{l  c c c  c c c }
            \toprule
              &\multicolumn{3}{c}{\textbf{WN18RR}}&  \multicolumn{3}{c}{\textbf{FB15k-237}} \\
             \cmidrule(lr){2-4}
             \cmidrule(lr){5-7}
             & MRR & H@1 & H@10 & MRR & H@1 & H@10 \\
            \midrule
            R-Reg\_p1   &.281 &.220 &.394 &.310 &.228 &.338\\
            C-Reg\_p1   &.409 &.393 &.439 &.316 &.229 &.487\\
            \midrule
            R-DURA    &\textbf{.498} &\textbf{.455} &.577 &.368 &\textbf{.276} &.550 \\
            C-DURA &.491 &.449 &.571 &\textbf{.371} &\textbf{.276} &\textbf{.560} \\
            \bottomrule
        \end{tabular}
\end{table}

\section{The optimal value of p}
In DB models, the commonly used $p$ is either $1$ or $2$. When $p=2$, DURA takes the form as the one in Equation (8) in the main text. If $p=1$, we cannot expand the squared score function of the associated DB models as in Equation (4).
Thus, the induced regularizer takes the form of $\sum_{(h_i,r_j,t_k)\in\mathcal{S}}\|\textbf{h}_i\bar{\textbf{R}}_j-\textbf{t}_k\|_1+\|\textbf{t}_k\textbf{R}_j^\top-\textbf{h}_i\|_1$. The above regularizer with $p=1$ (Reg\_p1) does not gives an upper bound on the tensor nuclear-2 norm as in Theorem 1. Table \ref{table:cmp_results} shows that, DURA significantly outperforms Reg\_p1 on WN18RR and FB15k-237. Therefore, we choose $p=2$.

\section{Computational Complexity}
Suppose that $k$ is the number of triplets known to be true in the knowledge graph, $n$ is the embedding dimension of entities. Then, for CP and ComplEx, the complexity of DURA is $O(kn)$; for RESCAL, the complexity of DURA is $O(kn^2)$. That is to say, the computational complexity of weighted DURA is the same as the weighted squared Frobenius norm regularizer. 

\section{More Details About Experiments}
In this section, we introduce the training protocol and the evaluation protocol.
\begin{table}[t]
    \centering
    \caption{Hyperparameters found by grid search. $k$ is the embedding size, $b$ is the batch size, $\lambda$ is the regularization coefficients, and $\lambda_1$ and $\lambda_2$ are weights for different parts of the regularizer.}
    \label{table:hp}
    \vskip 0.1in
    \setlength{\tabcolsep}{1.25mm}{
    \begin{tabular}{l  c c c c c  c c c c c  c c c c c }
    \toprule
        &\multicolumn{5}{c}{\textbf{WN18RR}}&  \multicolumn{5}{c}{\textbf{FB15k-237}} & \multicolumn{5}{c}{\textbf{YAGO3-10}}\\
         \cmidrule(lr){2-6}
         \cmidrule(lr){7-11}
         \cmidrule(lr){12-16}
         & $k$ & $b$ & $\lambda$ & $\lambda_1$ &$\lambda_2$ & $k$ & $b$ & $\lambda$ & $\lambda_1$ &$\lambda_2$ & $k$ & $b$ & $\lambda$ & $\lambda_1$ &$\lambda_2$ \\
        \midrule
        CP & 2000 & 100 & 1e-1 & 0.5 & 1.5 & 2000 & 100 & 5e-2 & 0.5 & 1.5 & 1000 & 1000 & 5e-3 & 0.5 & 1.5\\
        ComplEx & 2000 & 100 & 1e-1 & 0.5  &1.5 & 2000 & 100 & 5e-2 & 0.5 & 1.5 & 1000 & 1000 & 5e-2 & 0.5 & 1.5\\
        RESCAL & 512 & 1024 &1e-1 & 1.0 & 1.0 & 512 & 512 &1e-1 & 2.0 & 1.5 & 512 & 1024 &5e-2 & 1.0 & 1.0\\
        \bottomrule
    \end{tabular}
    }
   
\end{table}
\subsection{Training Protocol} 
We adopt the cross entropy loss function for all considered models as suggested in \citet{old_dog}. We adopt the ``reciprocal'' setting that creates a new triplet $(e_k,r_j^{-1},e_i)$ for each triplet  $(e_i,r_j,e_k)$ \cite{n3,simple}. We use Adagrad \citep{adagrad} as the optimizer, and use grid search to find the best hyperparameters based on the performance on the validation datasets. Specifically, we search learning rates in $\{0.1, 0.01\}$, regularization coefficients in $\{0, 1\times 10^{-3}, 5\times 10^{-3}, 1\times 10^{-2}, 5\times 10^{-2}, 1\times 10^{-1}, 5\times 10^{-1}\}$. On WN18RR and FB15k-237, we search batch sizes in $\{100,500,1000\}$ and embedding sizes in $\{500,1000,2000\}$. On YAGO3-10, we search batch sizes in $\{256,512,1024\}$ and embedding sizes in $\{500,1000\}$. 
We search both $\lambda_1$ and $\lambda_2$ in $\{0.5, 1.0, 1.5, 2.0\}$.

We implement DURA in PyTorch and run on all experiments with a single NVIDIA GeForce RTX 2080Ti graphics card.

As we regard the link prediction as a multi-class classification problem and adopt the cross entropy loss, we can assign different weights for different classes (i.e., tail entities) based on their frequency of occurrence in the training dataset. Specifically, suppose that the loss of a given query $(h,r,?)$ is $\ell((h,r,?),t)$, where $t$ is the true tail entity, then the weighted loss is 
\begin{align*}
    w(t)\ell((h,r,?),t),
\end{align*}
where 
\begin{align*}
    w(t)=w_0\frac{\#t}{\max\{\#t_i:t_i \in \text{training set}\}} + (1-w_0),
\end{align*}
$w_0$ is a fixed number, $\#t$ denotes the frequency of occurrence in the training set of the entity $t$. For all models on WN18RR and RESCAL on YAGO3-10, we choose $w_0=0.1$ and for all the other cases we choose $w_0=0$.

We choose a learning rate of $0.1$ after grid search. Table \ref{table:hp} shows the other best hyperparameters for DURA found by grid search. Please refer to the Experiments part in the main text for the search range of the hyperparameters.

\subsection{Evaluation Protocol} 
Following \citet{transe}, we use entity ranking as the evaluation task. For each triplet $(h_i,r_j,t_k)$ in the test dataset, the model is asked to answer $(h_i, r_j, ?)$ and $(t_k, r_j^{-1}, ?)$. To do this, we fill the positions of missing entities with candidate entities to create a set of candidate triplets, and then rank the triplets in descending order by their scores. Following the ``Filtered'' setting in \citet{transe}, we then filter out all existing triplets known to be true at ranking. We choose Mean Reciprocal Rank (MRR) and Hits at N (H@N) as the evaluation metrics. Higher MRR or H@N indicates better performance. Detailed definitions are as follows.
\begin{itemize}[itemindent=-1em]
    \item The mean reciprocal rank is the average of the reciprocal ranks of results for a sample of queries Q:
    \begin{align*}
        \text{MRR}=\frac{1}{|Q|}\sum_{i=1}^{|Q|}\frac{1}{\text{rank}_i}.
    \end{align*}
    \item The Hits@N is the ratio of ranks that no more than $N$:
    \begin{align*}
        \text{Hits@N}=\frac{1}{|Q|}\sum_{i=1}^{|Q|}\mathbbm{1}_{x\le N}(\text{rank}_i),
    \end{align*}
    where $\mathbbm{1}_{x\le N}(\text{rank}_i)=1$ if $\text{rank}_i\le N$ or otherwise $\mathbbm{1}_{x\le N}(\text{rank}_i)=0$.
\end{itemize}

\subsection{The queries in T-SNE visualization}
In Table \ref{table:query_name}, we list the ten queries used in the T-SNE visualization (Section 5.4 in the main text). Note that a query is represented as $(h, r, ?)$, where $h$ denotes the head entity and $r$ denotes the relation.
\begin{table}[h]
    \centering
    \caption{The queries in T-SNE visualizations.}
    \label{table:query_name}
    \vskip 0.1in
    \resizebox{1 \columnwidth}!{
    \begin{tabular}{c l}
    \toprule
        \textbf{Index} & \textbf{Query} \\
        \midrule
        1 & (political drama, \text{/media\_common/netflix\_genre/titles}, ?)\\
        2 & (Academy Award for Best Original Song, \text{/award/award\_category/winners./award/award\_honor/ceremony},?)\\
        3 & (Germany, \text{/location/location/contains},?) \\
        4 & (doctoral degree ,  \text{/education/educational\_degree/people\_with\_this\_degree./education/education/major\_field\_of\_study},?)\\
        5 & (broccoli, \text{/food/food/nutrients./food/nutrition\_fact/nutrient},?)\\
        6 & (shooting sport, \text{/olympics/olympic\_sport/athletes./olympics/olympic\_athlete\_affiliation/country},?) \\
        7 & (synthpop, \text{/music/genre/artists}, ?)\\
        8 & (Italian American, \text{/people/ethnicity/people},?)\\
        9 & (organ, \text{/music/performance\_role/track\_performances./music/track\_contribution/role}, ?)\\
        10 & (funk, \text{/music/genre/artists}, ?)\\
        \bottomrule
    \end{tabular}
    }
\end{table}

\bibliography{neurips_2020}
\bibliographystyle{neurips_2020}


\maketitle

\section{Proof for Theorem 1}
\begin{theorem}\label{thm:main}
Suppose that $\hat{\mathcal{X}}_j=\textbf{H}\textbf{R}_j\textbf{T}^\top$ for $j=1,2,\dots,|\mathcal{R}|$, where $\textbf{H},\textbf{T},\textbf{R}_j$ are real matrices and $\textbf{R}_j$ is diagonal. Then, the following equation holds
\begin{align*}
    \min_{\substack{\hat{\mathcal{X}}_j=\textbf{H} \textbf{R}_j\textbf{T}^\top\\  }}&\frac{1}{\sqrt{|\mathcal{R}}|}\sum_{j=1}^{|\mathcal{R}|}(\|\textbf{H}\textbf{R}_j\|_F^2+\|\textbf{T}\|_F^2+\|\textbf{T}\textbf{R}_j^\top\|_F^2+\|\textbf{H}\|_F^2)=\|\hat{\mathcal{X}}\|_*.
\end{align*}
The equation holds if and only if
$
  \|\textbf{h}_{:d}\|_2\|\textbf{r}_{:d}\|_2=\sqrt{|\mathcal{R}|}\|\textbf{t}_{:d}\|_2
$
and
$
  \|\textbf{t}_{:d}\|_2\|\textbf{r}_{:d}\|_2=\sqrt{|\mathcal{R}|}\|\textbf{h}_{:d}\|_2,
$
for all $\,d\in\{1,2,\ldots, D\}$,
where $\textbf{h}_{:d}$, $\textbf{r}_{:d}$, and $\textbf{t}_{:d}$ are the $d$-th columns of $\textbf{H}$, $\widetilde{\textbf{R}}$, and $\textbf{T}$, respectively.
\end{theorem}

\begin{proof}
We have that
\begin{align*}
    &\sum_{j=1}^{|\mathcal{R}|}\left(\|\textbf{H}\textbf{R}_j\|_F^2+\|\textbf{T}\|_F^2\right)\\
    =&\sum_{j=1}^{|\mathcal{R}|}\left(\sum_{i=1}^I\|\textbf{h}_{i}\circ \textbf{r}_{j}\|_F^2+\sum_{d=1}^D\|\textbf{t}_{:d}\|_F^2\right)\\
    =&\sum_{j=1}^{|\mathcal{R}|}\left(\sum_{d=1}^D\|\textbf{t}_{:d}\|_F^2+\sum_{i=1}^I\sum_{d=1}^D\textbf{h}_{id}^2\textbf{r}_{jd}^2\right)\\
    =&\sum_{j=1}^{|\mathcal{R}|}\sum_{d=1}^D\|\textbf{t}_{:d}\|_2^2+\sum_{d=1}^D\|\textbf{h}_{:d}\|_2^2\|\textbf{r}_{:d}\|_2^2\\
    =&\sum_{d=1}^D(\|\textbf{h}_{:d}\|_2^2\|\textbf{r}_{:d}\|_2^2+|\mathcal{R}|\|\textbf{t}_{:d}\|_2^2)\\
    \ge& \sum_{d=1}^D2\sqrt{|\mathcal{R}|}\|\textbf{h}_{:d}\|_2\|\textbf{r}_{:d}\|_2\|\textbf{t}_{:d}\|_2\\
    =&2\sqrt{|\mathcal{R}|}\sum_{d=1}^D\|\textbf{h}_{:d}\|_2\|\textbf{r}_{:d}\|_2\|\textbf{t}_{:d}\|_2.
\end{align*}
The equality holds if and only if $\|\textbf{h}_{:d}\|_2^2\|\textbf{r}_{:d}\|_2^2=|\mathcal{R}|\|\textbf{t}_{:d}\|_2^2$, i.e., $\|\textbf{h}_{:d}\|_2\|\textbf{r}_{:d}\|_2=\sqrt{|\mathcal{R}|}\|\textbf{t}_{:d}\|_2$.

For all CP decomposition $\hat{\mathcal{X}}=\sum_{d=1}^D \textbf{h}_{:d} \otimes \textbf{r}_{:d} \otimes \textbf{t}_{:d}$,  we can always let $\textbf{h}'_{:d}=\textbf{h}_{:d}$, $\textbf{r}'_{:d}=\sqrt{\frac{\|\textbf{t}_d\|_2\sqrt{|\mathcal{R}|}}{\|\textbf{h}_{:d}\|_2\|\textbf{r}_{:d}\|_2}}\textbf{r}_{:d}$ and $\textbf{t}'_{:d}= \sqrt{\frac{\|\textbf{h}_{:d}\|_2\|\textbf{r}_{:d}\|_2}{\|\textbf{t}_{:d}\|_2\sqrt{|\mathcal{R}|}}}\textbf{t}_{:d}$ such that
$$\|\textbf{h}'_{:d}\|_2\|\textbf{r}'_{:d}\|_2=\sqrt{|\mathcal{R}|}\|\textbf{t}'_{:d}\|_2,$$
and meanwhile ensure that $\hat{\mathcal{X}}=\sum_{d=1}^D \textbf{h}'_{:d} \otimes \textbf{r}'_{:d} \otimes \textbf{t}'_{:d}$. Therefore, we know that
\begin{align*}
    \frac{1}{\sqrt{|\mathcal{R}|}}\sum_{j=1}^{|\mathcal{R}|}\|\hat{\mathcal{X}}_j\|_*&=\frac{1}{2\sqrt{|\mathcal{R}|}}\sum_{j=1}^{|\mathcal{R}|}\min_{\hat{\mathcal{X}}_j=\textbf{H}\textbf{R}_j\textbf{T}^\top}(\|\textbf{H}\textbf{R}_j\|_F^2+\|\textbf{T}\|_F^2)\\
    &\le\frac{1}{2\sqrt{|\mathcal{R}|}}\min_{\hat{\mathcal{X}}_j=\textbf{H}\textbf{R}_j\textbf{T}^\top}\sum_{j=1}^{|\mathcal{R}|}(\|\textbf{H}\textbf{R}_j\|_F^2+\|\textbf{T}\|_F^2)\\
    &= \min_{\hat{\mathcal{X}}=\sum_{d=1}^D \textbf{h}_{:d}\otimes \textbf{r}_{:d}\otimes \textbf{t}_{:d}}\sum_{d=1}^D\|\textbf{h}_{:d}\|_2\|\textbf{r}_{:d}\|_2\|\textbf{t}_{:d}\|_2\\
    &=\|\hat{\mathcal{X}}\|_*.
\end{align*}
In the same manner, we know that
\begin{align*}
    \frac{1}{2\sqrt{|\mathcal{R}|}}\min_{\hat{\mathcal{X}}_j=\textbf{H}\textbf{R}_j\textbf{T}^\top}\sum_{j=1}^{|\mathcal{R}|}(\|\textbf{T}\textbf{R}_j^\top\|_F^2+\|\textbf{H}\|_F^2)=\|\hat{\mathcal{X}}\|_*.
\end{align*}
The equality holds if and only if $ \|\textbf{t}_{:d}\|_2\|\textbf{r}_{:d}\|_2=\sqrt{|\mathcal{R}|}\|\textbf{h}_{:d}\|_2$.

Therefore, the conclusion holds if and only if $
  \|\textbf{h}_{:d}\|_2\|\textbf{r}_{:d}\|_2=\sqrt{|\mathcal{R}|}\|\textbf{t}_{:d}\|_2
$
and
$
  \|\textbf{t}_{:d}\|_2\|\textbf{r}_{:d}\|_2=\sqrt{|\mathcal{R}|}\|\textbf{h}_{:d}\|_2,
$
$\,\forall\,d\in\{1,2,\ldots, D\}$.
\end{proof}

Therefore, for DURA, we know that 
\begin{align*}
     \min_{\substack{\hat{\mathcal{X}}_j=\textbf{H} \textbf{R}_j\textbf{T}^\top\\  }}\frac{1}{\sqrt{|\mathcal{R}|}}g(\hat{\mathcal{X}})=\|\hat{\mathcal{X}}\|_*,
\end{align*}
which completes the proof.


\begin{table}[ht]
    \centering
    \caption{Comparison to Reg\_p1. ``R'': RESCAL. ``C'': ComplEx. }    \label{table:cmp_results}
        \vskip 0.1in
        \begin{tabular}{l  c c c  c c c }
            \toprule
              &\multicolumn{3}{c}{\textbf{WN18RR}}&  \multicolumn{3}{c}{\textbf{FB15k-237}} \\
             \cmidrule(lr){2-4}
             \cmidrule(lr){5-7}
             & MRR & H@1 & H@10 & MRR & H@1 & H@10 \\
            \midrule
            R-Reg\_p1   &.281 &.220 &.394 &.310 &.228 &.338\\
            C-Reg\_p1   &.409 &.393 &.439 &.316 &.229 &.487\\
            \midrule
            R-DURA    &\textbf{.498} &\textbf{.455} &.577 &.368 &\textbf{.276} &.550 \\
            C-DURA &.491 &.449 &.571 &\textbf{.371} &\textbf{.276} &\textbf{.560} \\
            \bottomrule
        \end{tabular}
\end{table}

\section{The optimal value of p}
In DB models, the commonly used $p$ is either $1$ or $2$. When $p=2$, DURA takes the form as the one in Equation (8) in the main text. If $p=1$, we cannot expand the squared score function of the associated DB models as in Equation (4).
Thus, the induced regularizer takes the form of $\sum_{(h_i,r_j,t_k)\in\mathcal{S}}\|\textbf{h}_i\bar{\textbf{R}}_j-\textbf{t}_k\|_1+\|\textbf{t}_k\textbf{R}_j^\top-\textbf{h}_i\|_1$. The above regularizer with $p=1$ (Reg\_p1) does not gives an upper bound on the tensor nuclear-2 norm as in Theorem 1. Table \ref{table:cmp_results} shows that, DURA significantly outperforms Reg\_p1 on WN18RR and FB15k-237. Therefore, we choose $p=2$.

\section{Computational Complexity}
Suppose that $k$ is the number of triplets known to be true in the knowledge graph, $n$ is the embedding dimension of entities. Then, for CP and ComplEx, the complexity of DURA is $O(kn)$; for RESCAL, the complexity of DURA is $O(kn^2)$. That is to say, the computational complexity of weighted DURA is the same as the weighted squared Frobenius norm regularizer. 

\section{More Details About Experiments}
In this section, we introduce the training protocol and the evaluation protocol.
\begin{table}[t]
    \centering
    \caption{Hyperparameters found by grid search. $k$ is the embedding size, $b$ is the batch size, $\lambda$ is the regularization coefficients, and $\lambda_1$ and $\lambda_2$ are weights for different parts of the regularizer.}
    \label{table:hp}
    \vskip 0.1in
    \setlength{\tabcolsep}{1.25mm}{
    \begin{tabular}{l  c c c c c  c c c c c  c c c c c }
    \toprule
        &\multicolumn{5}{c}{\textbf{WN18RR}}&  \multicolumn{5}{c}{\textbf{FB15k-237}} & \multicolumn{5}{c}{\textbf{YAGO3-10}}\\
         \cmidrule(lr){2-6}
         \cmidrule(lr){7-11}
         \cmidrule(lr){12-16}
         & $k$ & $b$ & $\lambda$ & $\lambda_1$ &$\lambda_2$ & $k$ & $b$ & $\lambda$ & $\lambda_1$ &$\lambda_2$ & $k$ & $b$ & $\lambda$ & $\lambda_1$ &$\lambda_2$ \\
        \midrule
        CP & 2000 & 100 & 1e-1 & 0.5 & 1.5 & 2000 & 100 & 5e-2 & 0.5 & 1.5 & 1000 & 1000 & 5e-3 & 0.5 & 1.5\\
        ComplEx & 2000 & 100 & 1e-1 & 0.5  &1.5 & 2000 & 100 & 5e-2 & 0.5 & 1.5 & 1000 & 1000 & 5e-2 & 0.5 & 1.5\\
        RESCAL & 512 & 1024 &1e-1 & 1.0 & 1.0 & 512 & 512 &1e-1 & 2.0 & 1.5 & 512 & 1024 &5e-2 & 1.0 & 1.0\\
        \bottomrule
    \end{tabular}
    }
   
\end{table}
\subsection{Training Protocol} 
We adopt the cross entropy loss function for all considered models as suggested in \citet{old_dog}. We adopt the ``reciprocal'' setting that creates a new triplet $(e_k,r_j^{-1},e_i)$ for each triplet  $(e_i,r_j,e_k)$ \cite{n3,simple}. We use Adagrad \citep{adagrad} as the optimizer, and use grid search to find the best hyperparameters based on the performance on the validation datasets. Specifically, we search learning rates in $\{0.1, 0.01\}$, regularization coefficients in $\{0, 1\times 10^{-3}, 5\times 10^{-3}, 1\times 10^{-2}, 5\times 10^{-2}, 1\times 10^{-1}, 5\times 10^{-1}\}$. On WN18RR and FB15k-237, we search batch sizes in $\{100,500,1000\}$ and embedding sizes in $\{500,1000,2000\}$. On YAGO3-10, we search batch sizes in $\{256,512,1024\}$ and embedding sizes in $\{500,1000\}$. 
We search both $\lambda_1$ and $\lambda_2$ in $\{0.5, 1.0, 1.5, 2.0\}$.

We implement DURA in PyTorch and run on all experiments with a single NVIDIA GeForce RTX 2080Ti graphics card.

As we regard the link prediction as a multi-class classification problem and adopt the cross entropy loss, we can assign different weights for different classes (i.e., tail entities) based on their frequency of occurrence in the training dataset. Specifically, suppose that the loss of a given query $(h,r,?)$ is $\ell((h,r,?),t)$, where $t$ is the true tail entity, then the weighted loss is 
\begin{align*}
    w(t)\ell((h,r,?),t),
\end{align*}
where 
\begin{align*}
    w(t)=w_0\frac{\#t}{\max\{\#t_i:t_i \in \text{training set}\}} + (1-w_0),
\end{align*}
$w_0$ is a fixed number, $\#t$ denotes the frequency of occurrence in the training set of the entity $t$. For all models on WN18RR and RESCAL on YAGO3-10, we choose $w_0=0.1$ and for all the other cases we choose $w_0=0$.

We choose a learning rate of $0.1$ after grid search. Table \ref{table:hp} shows the other best hyperparameters for DURA found by grid search. Please refer to the Experiments part in the main text for the search range of the hyperparameters.

\subsection{Evaluation Protocol} 
Following \citet{transe}, we use entity ranking as the evaluation task. For each triplet $(h_i,r_j,t_k)$ in the test dataset, the model is asked to answer $(h_i, r_j, ?)$ and $(t_k, r_j^{-1}, ?)$. To do this, we fill the positions of missing entities with candidate entities to create a set of candidate triplets, and then rank the triplets in descending order by their scores. Following the ``Filtered'' setting in \citet{transe}, we then filter out all existing triplets known to be true at ranking. We choose Mean Reciprocal Rank (MRR) and Hits at N (H@N) as the evaluation metrics. Higher MRR or H@N indicates better performance. Detailed definitions are as follows.
\begin{itemize}[itemindent=-1em]
    \item The mean reciprocal rank is the average of the reciprocal ranks of results for a sample of queries Q:
    \begin{align*}
        \text{MRR}=\frac{1}{|Q|}\sum_{i=1}^{|Q|}\frac{1}{\text{rank}_i}.
    \end{align*}
    \item The Hits@N is the ratio of ranks that no more than $N$:
    \begin{align*}
        \text{Hits@N}=\frac{1}{|Q|}\sum_{i=1}^{|Q|}\mathbbm{1}_{x\le N}(\text{rank}_i),
    \end{align*}
    where $\mathbbm{1}_{x\le N}(\text{rank}_i)=1$ if $\text{rank}_i\le N$ or otherwise $\mathbbm{1}_{x\le N}(\text{rank}_i)=0$.
\end{itemize}

\subsection{The queries in T-SNE visualization}
In Table \ref{table:query_name}, we list the ten queries used in the T-SNE visualization (Section 5.4 in the main text). Note that a query is represented as $(h, r, ?)$, where $h$ denotes the head entity and $r$ denotes the relation.
\begin{table}[h]
    \centering
    \caption{The queries in T-SNE visualizations.}
    \label{table:query_name}
    \vskip 0.1in
    \resizebox{1 \columnwidth}!{
    \begin{tabular}{c l}
    \toprule
        \textbf{Index} & \textbf{Query} \\
        \midrule
        1 & (political drama, \text{/media\_common/netflix\_genre/titles}, ?)\\
        2 & (Academy Award for Best Original Song, \text{/award/award\_category/winners./award/award\_honor/ceremony},?)\\
        3 & (Germany, \text{/location/location/contains},?) \\
        4 & (doctoral degree ,  \text{/education/educational\_degree/people\_with\_this\_degree./education/education/major\_field\_of\_study},?)\\
        5 & (broccoli, \text{/food/food/nutrients./food/nutrition\_fact/nutrient},?)\\
        6 & (shooting sport, \text{/olympics/olympic\_sport/athletes./olympics/olympic\_athlete\_affiliation/country},?) \\
        7 & (synthpop, \text{/music/genre/artists}, ?)\\
        8 & (Italian American, \text{/people/ethnicity/people},?)\\
        9 & (organ, \text{/music/performance\_role/track\_performances./music/track\_contribution/role}, ?)\\
        10 & (funk, \text{/music/genre/artists}, ?)\\
        \bottomrule
    \end{tabular}
    }
\end{table}

\bibliography{neurips_2020}
\bibliographystyle{neurips_2020}